\documentclass[english]{llncs}
\usepackage[T1]{fontenc}
\usepackage[latin9]{inputenc}
\usepackage{array}
\usepackage{float}
\usepackage{fancybox}
\usepackage{calc}
\usepackage{mathtools}
\usepackage{multirow}
\usepackage{amsmath}
\usepackage{amssymb}
\usepackage{stmaryrd}
\usepackage{graphicx}
\usepackage{url}
\urlstyle{rm}

\makeatletter

\providecommand{\tabularnewline}{\\}
\floatstyle{ruled}
\newfloat{algorithm}{tbp}{loa}
\providecommand{\algorithmname}{Algorithm}
\floatname{algorithm}{\protect\algorithmname}

\makeatother

\usepackage{babel}
\newcommand{\card}[1]{\ensuremath{\sharp #1}}
\begin{document}

\title{\Large Handling Nominals and Inverse Roles using Algebraic Reasoning}

\author{Humaira Farid \and Volker Haarslev}

\institute{Concordia University, Montreal \\
\email h\_arid@encs.concordia.ca, haarslev@cse.concordia.ca \vspace{-4mm}}
\maketitle

\begin{abstract}
This paper presents a novel $\mathcal{SHOI}$ tableau calculus which incorporates algebraic reasoning for deciding ontology consistency. Numerical restrictions imposed by nominals, existential and universal restrictions are encoded into a set of linear inequalities. Column generation and branch-and-price algorithms are used to solve these inequalities. Our preliminary experiments indicate that this calculus performs better on $\mathcal{SHOI}$ ontologies than standard tableau methods. 
\end{abstract}

\section{Introduction and Motivation}

Description Logic (DL) is a formal knowledge representation language
that is used for modeling ontologies. Modern description logic systems
provide reasoning services that can automatically infer implicit knowledge
from explicitly expressed knowledge. Designing reasoning algorithms
with high performance has been one of the main concerns of DL researchers.
One of the key features of many description logics is support for
nominals. Nominals are special concept names that must be interpreted
as singleton sets. They allow to use Abox individuals within concept descriptions. However, nominals carry implicit global numerical
restrictions that increase reasoning complexity. Moreover, the
interaction between nominals and inverse roles leads to the 
loss of the tree model property. Most state-of-the-art reasoners,
such as Konclude \cite{steigmiller2014konclude}, Fact++ \cite{tsarkov2006fact++},
HermiT \cite{shearer2008hermit}, have implemented traditional tableau algorithms. Konclude also incorporated consequence-based reasoning into its tableau calculus  \cite{steigmiller2014coupling}.
These reasoners try to construct completion graphs in
a highly non-deterministic way in order to handle nominals. For example, a small $\mathcal{ALCO}$ ontology models Canada consisting of its ten provinces: $\mathit{CA\_Province}\equiv\{\mathit{Ontario}$, $\mathit{Quebec}$, $\mathit{NovaScotia}$, $\mathit{NewBrunswick}$, $\mathit{Manitoba}$, $\mathit{BritishColumbia}$, $\mathit{PrinceEdwardIsland}$, $\mathit{Saskatchewan}$, $\mathit{NewfoundlandAndLabrador}$, $\mathit{Alberta}\}$.
If one tries to model that Canada consists of 11 provinces, it is trivial
to see that it is not possible because the cardinality of $\mathit{CA\_Province}$ is implicitly restricted to the 10 provinces listed as nominals. However, according to our preliminary experiments, above mentioned DL reasoners are unable to decide this inconsistency within
a reasonable amount of time. Consequence-based (CB) reasoning algorithms
are also extended to more expressive DLs such as
$\mathcal{SHOI}$ \cite{cucala2017consequence} and $\mathcal{SROI}Q$
\cite{cucala2018consequence}. Since their implementations are not available, we could not analyze these reasoners.

However, algebraic DL reasoners are considered more efficient in handling numerical restrictions \cite{faddoul2010algebraic,farsiniamarj2010practical,haarslev2001racer,vlasenko2016saturation}. RacerPro \cite{haarslev2001racer}
was the first highly optimized reasoner that combined tableau-based
reasoning with algebraic reasoning \cite{haarslev2001combining}. 
Other tableau-based algebraic reasoner for $\mathcal{SHQ}$
\cite{farsiniamarj2010practical}, $\mathcal{SHIQ}$ \cite{pour2012algebraic}, $\mathcal{SHOQ}$ \cite{faddoul2010optimizing,faddoul2010algebraic} are also proposed to handle qualified number restrictions (QNRs) and their interaction with inverse roles or nominals. These reasoners use an atomic decomposition technique to encode number restrictions into a set of linear inequalities. These inequalities are then solved by integer linear programming (ILP). These reasoners perform very efficiently in handling huge values in number restrictions. However, their ILP algorithms are best-case exponential to the number of inequalities. For example, in case of $m$ inequalities they require $2^{m}$ variables in order to find the optimal solution. However, for ILP with a huge number of variables it is not feasible to enumerate all variables. To overcome this problem, the column generation technique has been used  \cite{vlasenko2016saturation,karahroodi2017consequence} which considers a small subset of variables.
However, to the best of our knowledge, no algebraic
calculus can handle DLs supporting nominals and inverse roles simultaneously.

In this paper, we present a novel algebraic tableau calculus for $\mathcal{SHOI}$
to handle a large number of nominals and their interaction with
inverse roles. The rest of this paper is structured as follows. Section
2 defines important terms and introduces $\mathcal{SHOI}$.
Section 3 presents the algebraic tableau calculus for $\mathcal{SHOI}$.
Section 4 provides evaluation results for the implemented prototype Cicada. The last section concludes our paper.

\section{Preliminaries } 

In this section, we introduce $\mathcal{SHOI}$ and some notations
used later. Let $N=N_{C}\cup N_{o}$ where $N_{C}$ represents concept names and $N_{o}$  nominals. Let $N_{R}$
be a set of role names with a set of transitive
roles $N_{R_{+}}\subseteq N_{R}$. The set of roles in $\mathcal{SHOI}$ is $N_{R}\cup\{R^{-}\mid R\in N_{R}\}$
where $R^{-}$ is called the inverse of $R$. A function $\mathsf{Inv}$
returns the inverse of a role such that $\mathsf{Inv}(R)=R^{-}$ if
$R\in N_{R}$ and $\mathsf{Inv}(R)=S$ if $R=S^{-}$ and $S\in N_{R}$. An interpretation
$\mathcal{I}=(\Delta^{\mathcal{I}},\cdot^{\mathcal{I}})$ consists
of a non-empty set $\Delta^{\mathcal{I}}$ of individuals called the
domain of interpretation and an interpretation function $\cdot^{\mathcal{I}}$.
Table~\ref{QCR-syntax} presents syntax and semantic of $\mathcal{SHOI}$. 
We use $\top$ ($\bot$) as an abbreviation for $A\sqcup\neg A$
($A\sqcap\neg A$) for some $A\in N_{C}$. 
In the following $\card{\{.\}}$ denotes set cardinality.

\begin{table}[t]
\caption{Syntax and semantics of $\mathcal{SHOI}$}
\label{QCR-syntax}
\vspace{-2mm}
\centering{}%
\begin{tabular}{|l|l|l|}
\hline 
\textbf{\small{}Construct}{\small{} } & \textbf{\small{}Syntax}{\small{} } & \textbf{\small{}Semantics}\tabularnewline
\hline 
\hline 
atomic concept  & $A$  & $A^{\mathcal{I}}\subseteq\Delta^{\mathcal{I}}$\tabularnewline
negation  & $\neg C$  & $\Delta^{\mathcal{I}}\setminus C^{\mathcal{I}}$\tabularnewline
conjunction  & $C\sqcap D$  & $C^{\mathcal{I}}\cap D^{\mathcal{I}}$\tabularnewline
disjunction  & {\small{}$C\sqcup D$}  & {\small{}$C^{\mathcal{I}}\cup D^{\mathcal{I}}$}\tabularnewline
value restriction  & $\forall R.C$  & $\{x\in\Delta^{\mathcal{I}}\mid\forall y:(x,y)\in R^{\mathcal{I}}\Rightarrow y\in C^{\mathcal{I}}\}$\tabularnewline
exists restriction  & $\exists R.C$  & $\{x\in\Delta^{\mathcal{I}}\mid\exists y\in\Delta^{\mathcal{I}}:(x,y)\in R^{\mathcal{I}}\wedge y\in C^{\mathcal{I}}\}$\tabularnewline
\hline 
nominal  & $\{o\}$  & $\card{\{o\}^{\mathcal{I}}}=1$ \tabularnewline
\hline 
role  & $R$  & $R^{\mathcal{I}}\subseteq\Delta^{\mathcal{I}}\times\Delta^{\mathcal{I}}$\tabularnewline
inverse role  & $R^{-}$  & $(R^{-})^{\mathcal{I}}=\{(y,x)\mid(x,y)\in R^{\mathcal{I}}\}$\tabularnewline
transitive role  & $R\in N_{R_{+}}$  & $(x,y)\in R^{\mathcal{I}}\wedge (y,z)\in R^{\mathcal{I}}\Rightarrow (x,z)\in R^{\mathcal{I}}$ 
\tabularnewline
\hline 
\end{tabular}\vspace{-3mm}
\end{table}

A role inclusion axiom (RIA) of the form $R\sqsubseteq S$ is satisfied
by $\mathcal{I}$ if $R^{\mathcal{I}}\subseteq S^{\mathcal{I}}$.
We denote with $\sqsubseteq_{*}$ the transitive, reflexive closure
of $\sqsubseteq$ over $N_{R}$. If $R\sqsubseteq_{*}S$, we call
$R$ a subrole of $S$ and $S$ a superrole of $R$. A general concept
inclusion (GCI) $C\sqsubseteq D$ is satisfied by $\mathcal{I}$ if
$C^{\mathcal{I}}\subseteq D^{\mathcal{I}}$. A role hierarchy $\mathcal{R}$
is a finite set of RIAs.  
A Tbox $\mathcal{T}$
is a finite set of GCIs. A Tbox $\mathcal{T}$ and its associated role hierarchy $\mathcal{R}$ is satisfied by $\mathcal{I}$ (or consistent) 
if each GCI and RIA is satisfied by $\mathcal{I}$.
Such an interpretation $\mathcal{I}$ is then called a model of $\mathcal{T}$.
A concept description $C$ is said to be satisfiable by $\mathcal{I}$
iff $C^{\mathcal{I}}\neq\emptyset$.
An Abox $\mathcal{A}$ is a finite set of assertions of the form ${a\!:C}$
(concept assertion) with $a^{\mathcal{I}}\in C^{\mathcal{I}}$, and
${(a,b)\!:R}$ (role assertion) with $(a^{\mathcal{I}},b^{\mathcal{I}})\in R^{\mathcal{I}}$.
Due to nominals, a concept assertion ${a\!:C}$ can be transformed into
a concept inclusion $\{a\}\sqsubseteq C$ and a role assertion ${(a,b)\!:R}$
into $\{a\}\sqsubseteq\exists R.\{b\}$. Therefore, concept satisfiability
and Abox consistency can be reduced to Tbox consistency by using
nominals. We use $\{o_{1},\ldots,o_{n}\}$ as an abbreviation for
$\{o_{1}\}\sqcup\cdots\sqcup\{o_{n}\}$ and may write $\{o\}$ as
$o$. Moreover, we do not make the unique name assumption; therefore, two nominals might refer to the same individual.

Nominals carry implicit global numerical restrictions. For example,
if $C\sqsubseteq\{o_{1},o_{2},o_{3}\}$ (or $\{o_{1},o_{2},o_{3}\}\sqsubseteq C$), then $o_{1},o_{2},o_{3}$
impose a numerical restriction that there can be at most (or at least, if  $o_{1},o_{2},o_{3}\in N_{o}$ are declared as pair-wise disjoint)
three instances of $C$. These restrictions are global because they
affect the set of all individuals of $C$ in $\Delta^{\mathcal{I}}$.
These implicit numerical restrictions increase reasoning complexity.  \vspace{-1.5mm}

\section{An Algebraic Tableau Calculus for $\mathcal{SHOI}$}

In this section, we present an algebraic tableau calculus for $\mathcal{SHOI}$
that decides Tbox consistency. Since nominals carry numerical
restrictions, algebraic reasoning is used to ensure their semantics.
The algorithm
takes a $\mathcal{SHOI}$ Tbox $\mathcal{T}$ and its role hierarchy $\mathcal{R}$ as input and
tries to create a complete and clash-free completion graph in order
to check Tbox consistency. The reasoner is divided into two modules:
1) Tableau Module (TM), and 2) Algebraic Module (AM).

Let $G=(V,E,\mathcal{L},\mathcal{B})$ be a completion graph for a $\mathcal{SHOI}$
Tbox $\mathcal{T}$ where $V$ is a set of nodes and $E$ a set of edges.
Each node $x\in V$ is labelled with a set of concepts $\mathcal{L}(x)$, and each edge $\left\langle x,y \right\rangle \in E$ with a set of role names $\mathcal{L}(x,y)$. For each node $x\in V$, if $\mathcal{L}(x)$ contains a universal restriction on role $R$ and there exists an $R$-neighbour of $x$, then $\mathcal{B}(x)$ contains a tuple of the form $\left\langle v,\mathcal{L}(x,v)\right\rangle $ 
where $v \in V$ is an $R$-neighbour of $x$.
We use $\card{v}$ to denote the cardinality of a node $v$. 
For convenience,
we assume that all concept descriptions are in negation normal form. 

TM starts with some preprocessing and reduces all the concept
axioms in a Tbox $\mathcal{T}$ to a single axiom $\top\sqsubseteq C_{\mathcal{T}}$
such that $C_{\mathcal{T}}\coloneqq\bigsqcap_{C\sqsubseteq D\in\mathcal{T}}\mathit{nnf}(\neg C\sqcup D)$, where $\mathit{nnf}$ transforms a given concept expression to its negation normal form.
The algorithm checks consistency of $\mathcal{T}$ by testing
the satisfiability of $o\sqsubseteq C_{\mathcal{T}}$ where $o\in N_{o}$
is a fresh nominal in $\mathcal{T}$, which means that at least $o^{\mathcal{I}} \in {C_{\mathcal{T}}}^{\mathcal{I}}$ and ${C_{\mathcal{T}}}^{\mathcal{I}} \neq \emptyset$. Moreover, since ${\top^{\mathcal{I}}} = {\Delta^{\mathcal{I}}}$ then every domain
element must also satisfy $C_{\mathcal{T}}$.
For creating a complete and
clash-free completion graph, TM applies expansion rules (see Figure \ref{Exp_Rules} and Section
\ref{sub:Expansion-Rules}). AM handles all numerical restrictions using ILP.
It generates inequalities and solves them using the branch-and-price
technique (see Section \ref{sub:Generating-Inequalities} for details). We use equality blocking \cite{horrocks1999description,hladik2004tableau} due to the presence of inverse roles.
\vspace{-4mm}

\subsection{Expansion Rules \label{sub:Expansion-Rules}} 

In order to check the consistency of a Tbox $\mathcal{T}$, the proposed algorithm 
creates a completion graph $G$ using the expansion rules shown in Figure
\ref{Exp_Rules}. A node $x$ in $G$ contains a clash if $\{A,\neg A\} \subseteq \mathcal{L}(x)$ for  $A \in N_{C}$ or AM has no feasible solution for $x$. $G$ is complete if no expansion rule is applicable to any node in $G$. $\mathcal{T}$ is consistent if $G$ is complete and no node in $G$ contains a clash.

The $\sqcap$-Rule, $\sqcup$-Rule and $\forall$-Rule are similar
to standard tableau expansion rules for $\mathcal{ALC}$. The
$\forall_{+}$-Rule preserves the semantics of transitive roles. The
$\mathit{nom_\mathit{merge}}$\textbf{-Rule} merges two nodes containing in their label the same nominal.
Suppose there is $o \in \mathcal{L}(x)$ and $o \in \mathcal{L}(y)$, and nodes $x$ and $y$ are not the same, then $\mathit{nom_\mathit{merge}}$-Rule
merges $x$ into $y$. It adds $\mathcal{L}(x)$ to
$\mathcal{L}(y)$ and moves all edges leading to (from) $x$ so
that they lead to (from) $y$. 
For each node $z$, if $\left\langle z,y \right\rangle \in E$ and $\left\langle z,x \right\rangle \in E$, then $ \mathcal{L}(z,y) =  \mathcal{L}(z,y) \cup  \mathcal{L}(z,x)$. Similarly, if $ \left\langle y,z \right\rangle \in E$ and $ \left\langle x,z \right\rangle \in E$, then $ \mathcal{L}(y,z) =  \mathcal{L}(y,z) \cup  \mathcal{L}(x,z)$. It also merges $\mathcal{B}(x)$ into $\mathcal{B}(y)$.

\begin{figure}[t]
\begin{centering}
\begin{tabular}{|c>{\raggedright}p{0.45\paperwidth}|}
\hline 
$\sqcap$-Rule & \textbf{if} $(C_{1}\sqcap C_{2})\in\mathcal{L}(x)$ and $\{C_{1},C_{2}\}\nsubseteq\mathcal{L}(x)$

\textbf{then} set $\mathcal{L}(x)=\mathcal{L}(x)\cup\{C_{1},C_{2}\}$\tabularnewline
$\sqcup$-Rule & \textbf{if} $(C_{1}\sqcup C_{2})\in\mathcal{L}(x)$ and $\{C_{1},C_{2}\}\cap\mathcal{L}(x)=\emptyset$

\textbf{then} set $\mathcal{L}(x)=\mathcal{L}(x)\cup\{C\}$ for some
$C\in\{C_{1},C_{2}\}$ \tabularnewline
$\forall$-Rule & \textbf{if} $\forall S.C\in\mathcal{L}(x)$ and there 
$\exists y$ with $R\in\mathcal{L}(x,y)$, $C\notin\mathcal{L}(y)$ and
$R\sqsubseteq_{*}S$ \newline
\textbf{then} set $\mathcal{L}(y)=\mathcal{L}(y)\cup\{C\}$\tabularnewline
$\forall_{+}$-Rule & \textbf{if} $\forall S.C\in\mathcal{L}(x)$ and there exist  $U,R$
with $R\in N_{R_{+}}$ and $U\sqsubseteq_{*}R$, $R\sqsubseteq_{*}S$, and 
a node $y$ with $U\in\mathcal{L}(x,y)$ and $\forall R.C\notin\mathcal{L}(y)$\newline
\textbf{then} set $\mathcal{L}(y)=\mathcal{L}(y)\cup\{\forall R.C\}$\tabularnewline
$\mathit{nom_\mathit{merge}}$-Rule & \textbf{if} for some $o\in N_{o}$ there are nodes $x$, $y$ with
$o\in\mathcal{L}(x)\cap\mathcal{L}(y)$, $x\neq y$
\textbf{then} if $x$ is an initial node, then merge $y$ into $x$, else merge $x$ into $y$ \tabularnewline

$\mathit{inverse}$-Rule & \textbf{if} $\forall R^{-}.C\in\mathcal{L}(y)$, $R\in\mathcal{L}(x,y)$, and $\left\langle x,\mathcal{L}(y,x) \right\rangle \notin \mathcal{B}(y)$ \newline\textbf{then} set $\mathcal{B}(y)=\mathcal{B}(y)\cup \{\left\langle x, \mathcal{L}(y,x) \right\rangle \}$ 
\tabularnewline

$\mathit{fil}$-Rule & \textbf{if} $\left\langle \mathsf{R,C},n,\mathsf{V}\right\rangle \in\sigma(x)$ and 
$x$ is not blocked \textbf{then} \vspace{-2.75mm} 
\begin{enumerate}
\item \textbf{if} $\mathsf{V}=\emptyset$ and there exists no $\mathsf{R}$-neighbour $y$ of $x$ with $\mathsf{C}\subseteq\mathcal{L}(y)$, $\card{y}\geq n$, \textbf{then} create a new node $y$ with $\mathcal{L}(y)\leftarrow \mathsf{C}$ and $\card{y}\leftarrow n$
\item \textbf{else} for all $v \in \mathsf{V}$ add $\mathsf{C}$ to $\mathcal{L}(v)$ and set $\card{v}=n$\vspace{-3.2mm} 
\end{enumerate}
\tabularnewline

$e$-Rule & \textbf{if} $\left\langle \mathsf{R,C},n,\mathsf{V}\right\rangle \in\sigma(x)$ and $\mathsf{C}\subseteq\mathcal{L}(y)$,
$\card{y}\geq n$, $\mathsf{R}\nsubseteq\mathcal{L}(x,y)$

\textbf{then} merge $\mathsf{R}$ into $\mathcal{L}(x,y)$ and $\{\mathsf{Inv}(R) \mid R \in \mathsf{R}\}$
into $\mathcal{L}(y,x)$, and for all $S$ with $R\sqsubseteq_{*}S\in\mathcal{R}$ add $S$ to $\mathcal{L}(x,y)$ and $\mathsf{Inv}(S)$ to $\mathcal{L}(y,x)$ \tabularnewline
\hline 
\end{tabular}
\par\end{centering} 
\caption{The expansion rules for $\mathcal{SHOI}$}
\label{Exp_Rules}\vspace{-4mm}
\end{figure}

If $\mathcal{L}(x,y) = \{R\}$ and $\forall R^{-}.C\in\mathcal{L}(y)$, then
the $\mathit{inverse}$\textbf{-Rule} encodes for AM the already existing $R^{-}$-edge 
by adding a tuple $\left\langle x,\{R^{-}\}\right\rangle $ to  $\mathcal{B}(y)$.
AM plays also an important role if nominals occur 
in universal restriction. For example, consider the axioms $A\sqsubseteq\exists R.B$, $B\sqsubseteq\exists R^{-}.C\sqcap\exists R^{-}.D\sqcap\forall R^{-}.\{o_{1}, o_{2}\}$ and $o_{1}\sqcap o_{2}\sqsubseteq \bot$,
where $A,B,C,D\in N_{C}$, $o_{1},o_{2}\in N_{o}$ and $R\in N_{R}$. Suppose we have $A \in \mathcal{L}(x)$,  $R \in \mathcal{L}(x,y)$ and $B \in \mathcal{L}(y)$. Since nominals carry numerical restrictions, $\forall R^{-}.\{o_{1}, o_{2}\}$
implies that we can have at most 2 $R^{-}$-neighbours of $y$.
However, standard tableau reasoners might create two new $R^{-}$-neighbours
of $y$ without considering the existing $R^{-}$-neighbour $x$ of $y$. Then they try to merge these three nodes in a non-deterministic way to satisfy
the numerical restriction imposed by nominals. In our approach, the
$\mathit{inverse}$-Rule encodes information about an existing $R^{-}$-neighbour
of $y$ and AM generates a deterministic solution. 

For a node $x$, AM transforms all existential restrictions, universal restrictions
and nominals to a corresponding system of inequalities. AM then processes
these inequalities and gives back a solution set $\sigma(x)$. The set $\sigma(x)$ is either empty or contains
solutions derived from feasible inequalities. In case of infeasibility AM signals a clash. A solution is defined by a set
of tuples of the form $\left\langle \mathsf{R},\mathsf{C},n,\mathsf{V}\right\rangle $
with $\mathsf{R}\subseteq N_{R}$, 
$\mathsf{C}\subseteq N$, $n\in \mathbb{N}$, $n \geq 1$ and $\mathsf{V}\subseteq V$. Each tuple represents $n$ $\mathsf{R}$-neighbours of $x$ (where $\mathsf{R}$ is a set of roles) that are instances of all elements of $\mathsf{C}$. Here, $\mathsf{V}$ is an optional set that contains existing $\mathsf{R}$-neighbours of $x$ that must be reused and $\mathsf{C}$ is added to their labels. Consider
the axiom $A\sqsubseteq\exists R.B\sqcap\exists R.C\sqcap\forall S.\left\{ o\right\} $,
where $A,B,C\in N_{C}$, $o\in N_{o}$,  $R,S\in N_{R}$, $R \sqsubseteq S$, and $A\in\mathcal{L}(x)$. AM returns
the solution $\sigma(x)=\{\left\langle \{R,S\},\{B,C,o\},1\right\rangle \}$.
The\textbf{ $\mathit{fil}$-Rule} is used to generate nodes based on the arithmetic
solution that satisfies a set of inequalities. For the above solution, 
the $\mathit{fil}$-Rule creates one node $y$ with cardinality 1 such that
$\mathcal{L}(y)\leftarrow\{B,C,o\}$ and $\card{y}=1$. The\textbf{ $e$-Rule}
creates an edge between nodes $x$ and $y$, and adds $R,S$ to $\mathcal{L}(x,y)$
and $\mathsf{Inv}(R),\mathsf{Inv}(S)$ to $\mathcal{L}(y,x)$. The $e$-Rule always adds all implied superroles to edge labels.
\vspace{-3mm}

\subsection{Generating Inequalities \label{sub:Generating-Inequalities}}

Dantzig and Wolfe \cite{dantzig1960decomposition} proposed a column generation technique for solving linear programming (LP) problems, called
Dantzig\textendash Wolfe decomposition, where a large LP is decomposed into
a master problem and a subproblem (or pricing problem). In case of LP problems with a huge number of variables, column generation works with a small subset of variables and builds a Restricted Master Problem (RMP). The Pricing Problem (PP) generates a new variable with the most reduced cost if added to RMP (see \cite{chvatal1983linear,vlasenko2017pushing} for details).
However, column generation may not necessarily give an integral solution for an LP relaxation, i.e., at least one variable has not an integer value.
Therefore, the branch-and-price method \cite{barnhart1998branch}
has been used which is a combination of column generation 
and branch-and-bound technique \cite{desrosiers1984routing}. We employ this technique by mapping number restrictions to linear inequality systems using a column generation ILP formulation (see \cite{vlasenko2017pushing} for details). CPLEX \cite{cplex}
has been used to solve our ILP formulation. \vspace{-3mm}

\subsubsection{Encoding Existential Restrictions and Nominals into Inequalities}

The atomic decomposition technique \cite{Ohlbach-Koehler-99} is used to
encode numerical restrictions on concepts and role fillers into
inequalities. These inequalities are then solved for deciding the
satisfiability of the numerical restrictions. 
The existential restrictions are converted into $\geq1$ inequalities. 
The cardinality of a
partition element containing a nominal $o$ is equal to $1$ due to
the nominal semantics; $\card{\{o\}^{I}}=1$ for each nominal $o\in N_{o}$.
Therefore, the decomposition set is defined as $Q=Q_{\exists}\cup Q_{\forall} \cup Q_{o}$, where
$Q_{\exists}$ ($Q_{\forall}$) contains existential (universal) restrictions and $Q_{o}$ contains
all related nominals. Each element $R_{q}\in Q_{\exists}\cup Q_{\forall}$ represents
a role $R\in N_{R}$ and its qualification concept expression $q$ and each
element $I_{q}\in Q_{o}$ represents a nominal $q\in N_{o}$. The elements in $Q_{\forall}$ are used by AM to ensure the semantics of universal restrictions.
The set of related nominals $Q_{o}\subseteq N_{o}$ is defined as $Q_{o}=\{o\mid o\in clos(q)\wedge R_{q}\in Q_{\exists} \cup Q_{\forall}\}$
where $clos(q)$ is the closure of concept expression $q$. The atomic decomposition
considers all possible ways to decompose $Q$ into sets that are semantically pairwise disjoint. \vspace{-3mm}

\subsubsection{Branch-and-Price Method}

In the following, we use a Tbox $\mathcal{T}$ and its role hierarchy $\mathcal{R}$, a completion graph $G$, a decomposition set $Q$ and a partitioning $\mathcal{P}$
that is the power set of $Q$ containing all subsets of $Q$ except the
empty set. Each partition element $p\in\mathcal{P}$ represents the
intersection of its elements. We decompose our problem into two subproblems:
(i) restricted master problem (RMP), and (ii) pricing problem (PP).
RMP contains a subset of columns and PP computes a column
that can maximally reduce the cost of RMP's objective. Whenever a column
with negative reduced cost is found, it is added to RMP. Number restrictions
are represented in RMP as inequalities, with a restricted set of variables. The flowchart in Figure~\ref{AR} illustrates the whole process. \vspace{-3mm} 

\begin{figure}[t]
\begin{centering}
\includegraphics[scale=0.635]{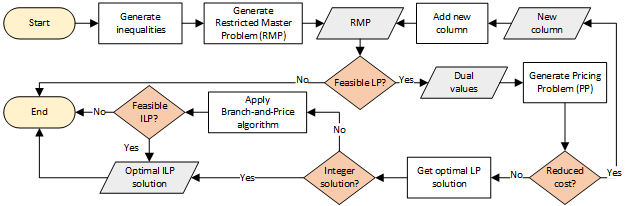} 
\par\end{centering}
\vspace*{-3mm}
\caption{Overview of the algebraic reasoning process}
\label{AR}\vspace{-4mm}
\end{figure}

\subsubsection*{Restricted Master Problem}

RMP is obtained by considering only variables $x_{p}$ with $p \in \mathcal{P}'$ and $\mathcal{P}'\subseteq\mathcal{P}$ and relaxing the
integrality constraints on the $x_{p}$ variables. Initially $\mathcal{P}'$ is empty and RMP contains only artificial variables $h$ to obtain an initial feasible inequality system. Each artificial variable corresponds to
an element in $Q_{\exists} \cup Q_{o}$ such that $h_{R_{q}}$, $R_{q}\in Q_{\exists}$
and $h_{I_{q}}$, $I_{q}\in Q_{o}$. An arbitrarily large cost $M$
is associated with every artificial variable. If any of these artificial
variables exists in the final solution, then the problem is 
infeasible. The objective of RMP is defined as the sum of
all costs as shown in \eqref{eq:rmp_objA-1} of the RMP below.
\begin{equation}
\text{Min} \sum_{p\in\mathcal{P}'} cost_{p}x_{p}+ M\!\!\sum_{R_{q}\in Q_{\exists}} h_{R_{q}}+ M\!\!\sum_{I_{q}\in Q_{o}}h_{I_{q}} \quad\text{subject to} \label{eq:rmp_objA-1}
\end{equation}
\vspace*{-5mm}
\begin{alignat}{2}
\sum_{p\in\mathcal{P}'}a_{p}^{R_{q}}x_{p}+h_{R_{q}} & \geq &\ 1 & \quad  R_{q}\in Q_{\exists} \label{eq:rmpCon1-1}\\
\sum_{p\in\mathcal{P}'}a_{p}^{I_{q}}x_{p}+h_{I_{q}} & = &\ 1 & \quad I_{q}\in Q_{o} \label{eq:rmpCon3-1}
\end{alignat}
\vspace*{-4mm}
\begin{equation}
x_{p}\in\mathbb{R}^{+} \text{ with } p\in\mathcal{P}'\label{eq:lpEx1c-2-1}
\end{equation}
\begin{equation*}
a_{p}^{R_{q}},a_{p}^{I_{q}}\in\{0,1\},\, h_{R_{q}},h_{I_{q}}\in\mathbb{R}^{+} \text{ with } R_{q}\in Q_{\exists},\, I_{q}\in Q_{o}\label{eq:lpEx1c-2-1-1}
\end{equation*}

\noindent where a decision variable $x_{p}$ represents the elements of the partition element $p \in \mathcal{P}'$. The coefficients $a_{p}$ are associated with variables $x_{p}$ and $a_{p}^{R_{q}}$
indicates whether an $R$-neighbour that is an instance of $q$ exists
in $p$. Similarly, $a_{p}^{I_{q}}$ indicates whether
a nominal $q$ exists in $p$. The weight $cost_{p}$ defines the cost
of selecting $p$ and it depends on the number of elements  $p$ contains. Since we minimize the objective function,  $cost_{p}$ in the objective  \eqref{eq:rmp_objA-1} ensures that only
subsets with entailed concepts will be added which are the minimum number of concepts that are needed to satisfy all the axioms. Constraint \eqref{eq:rmpCon1-1}
encodes existential restrictions and \eqref{eq:rmpCon3-1}
 numerical restrictions imposed by nominals (i.e., $\card{\{o\}^{I}}=1$).
Constraint (\ref{eq:lpEx1c-2-1}) states the integrality condition relaxed from $x_{p}\in\mathbb{Z}^{+}$ to $x_{p}\in\mathbb{R}^{+}$.

\subsubsection*{Pricing Problem:}

The objective of PP uses the dual values $\pi, \omega$ as coefficients
of the variables that are associated with a potential partition element.
The binary variables $r_{R_{q}}$, $r_{I_{q}}$, $b_{q}$ ($q\in N$)
are used to ensure the description logic semantics. A binary variable $r_{R_{\top}}$ is used to handle role hierarchy. A variable $b_{q}$ is set to 1 if there exists an instance of concept $q$ and $r_{R{}_{q}}$
is set to 1 if there exists an $R$-neighbour that is an instance of concept
$q$. Likewise, $r_{I_{q}}$ is set to 1 if there exists
a nominal $q$. Otherwise these variable are set to 0. The PP is given below.
\begin{equation}
\text{Min} \sum_{q\in N}b_{q}-\sum_{R_{q}\in Q_{\exists}}\pi_{R_{q}}r_{R_{q}}-\sum_{I_{q}\in Q_{o}}\omega_{I_{q}}r_{I_{q}} \quad\text{subject to} \label{eq:rmp_objA-1-1}
\end{equation}
\vspace*{-5mm}
\begin{alignat}{2}
r_{R_{q}} - b_{q} & \leq &\ 0 & \qquad R_{q}\in Q_{\exists},R\in N_{R},q\in N_{C} \label{eq:existPP}\\
r_{I_{q}}-b_{q} & = &\ 0 & \qquad I_{q}\in Q_{o},q\in N_{o} \label{eq:nominalPP}\\
r_{R_{q}}-r_{R_{\top}} & \leq &\ 0 & \qquad R\in N_{R},q\in N \label{eq:roleT}\\
r_{R_{\top}}-b_{q} & \leq &\ 0 & \qquad R_{q}\in Q_{\forall}, R\in N_{R},q\in N \label{eq:forAll}\\
r_{R_{\top}}-r_{S_{\top}} & \leq & 0 & \qquad R\sqsubseteq S\in\mathcal{R}, R,S\in N_{R}  \label{eq:hierarchy}
\end{alignat}
\vspace*{-6mm}
\begin{equation*}
b_{q},r_{R_{q}},r_{I_{q}},r_{R_{\top}},r_{S_{\top}}\in\left\{ 0,1\right\} 
\end{equation*}

\noindent where vector $\pi$ and $\omega$ are dual variables associated with
 \eqref{eq:rmpCon1-1} and \eqref{eq:rmpCon3-1} respectively.
For each at-least restriction represented in \eqref{eq:rmpCon1-1}, Constraint \eqref{eq:existPP} is added to PP, which ensures
that if $r_{R_{q}}=1$ then variable for $b_{q}$ must exist in $\mathcal{P}'$.
Similarly, \eqref{eq:nominalPP} ensures the semantics
of nominals represented in \eqref{eq:rmpCon1-1}. Constraints
\eqref{eq:roleT} - \eqref{eq:hierarchy} ensure the semantics
of universal restrictions and role hierarchies respectively. 

We can also map the semantics of selected DL axioms, where only atomic concepts occur, into inequalities, as shown in Table \ref{dlToPP}. For every $\mathcal{T} \models A \sqcap B \sqsubseteq C$, AM adds $b_{A} + b_{B}-1 \leq b_{C}$ to PP. Therefore, if PP generates a partition containing $A$ and $B$, then it must also contain $C$. Similarly, for every $\mathcal{T} \models A\sqsubseteq B \sqcup C$, AM adds $b_{A} \leq  b_{B} + b_{C}$ to PP. This inequality ensures that if a partition contains $A$, then it must also contain $B$ or $C$. \vspace{-3mm}

\begin{table}[t]
\caption{DL axioms and their corresponding PP inequalities ($n \ge 1$)}
\label{dlToPP}\vspace{-2mm}

\centering{}%
\begin{tabular}{|>{\centering}m{0.2\columnwidth}|>{\centering}m{0.3\columnwidth}|>{\centering}m{0.45\columnwidth}|}
\hline 
DL Axiom & Inequality in PP & Description\tabularnewline
\hline 
\hline 
\hline 
{\small{}$A_{1}\sqcap...\sqcap A_{n}\sqsubseteq B$} & {\small{}$\sum_{i=1}^{n}b_{A_{i}}-(n-1)\leq b_{B}$} & \raggedright{}{\small{}If a set contains $A_{1},...,A_{n}$,
then it also contains $B$.$^{*}$}\tabularnewline
\hline 
{\small{}$A\sqsubseteq B_{1}\sqcup...\sqcup B_{n}$} & {\small{}$b_{A}\leq\sum_{i=1}^{n}b_{B_{i}}$} & \raggedright{}{\small{}If a set contains $A$,
then it also contains at least one concept from $B_{1},...,B_{n}$}.\tabularnewline
\hline 
\multicolumn{3}{l}{$^{*}${Encodes unsatisfiability and disjointness in case $B\sqsubseteq_{*}\bot$}}
\end{tabular}
\vspace{-4mm}
\end{table}

\subsubsection{Soundness and Completeness of Algebraic Module}

All existential restrictions and nominals are converted into linear
inequalities and added to RMP. Other axioms, such as universal restrictions,
role hierarchy, subsumption and disjointness, are embedded in PP.
In case of feasible inequalities, the branch-and-price algorithm returns
a solution set that contains valid partition elements. Since the branch-and-price
algorithm satisfies all the axioms embedded in RMP and PP, this solution
is sound. Moreover, it is also complete because CPLEX is used to solve
linear inequalities and it does not overlook any possible solution. 
\begin{proposition}
For a set of inequalities, the arithmetic module either generates
an optimal solution which satisfies all inequalities or detects infeasibility. 
\end{proposition}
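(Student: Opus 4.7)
The plan is to argue by case analysis on whether the underlying integer program admits a feasible solution, and in each case to appeal to standard properties of the branch-and-price machinery together with the way RMP and PP have been set up.

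First, I would observe that the initial RMP is trivially feasible: every constraint of type \eqref{eq:rmpCon1-1} and \eqref{eq:rmpCon3-1} contains its own artificial variable $h_{R_q}$ or $h_{I_q}$ with an arbitrarily large cost $M$, so setting all these variables to the required right-hand side yields a feasible (though expensive) starting basis. From this basis, column generation proceeds by repeatedly solving RMP, extracting dual values $\pi,\omega$, and invoking PP to find a column with the most negative reduced cost; any column returned is, by construction of constraints \eqref{eq:existPP}--\eqref{eq:hierarchy} and Table~\ref{dlToPP}, a valid partition element that respects existentials, nominal cardinalities, universal restrictions, role hierarchy, and the relevant GCIs. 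I would then note that since $Q$ is finite, the partition lattice $\mathcal{P}$ is finite, so only finitely many distinct columns can ever be priced in; hence the column generation loop terminates, at which point no column has negative reduced cost and the current LP relaxation is optimal for the full master LP (this is the classical Dantzig--Wolfe optimality certificate).

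Next, I would lift the LP result to the integer program via branch-and-price: whenever the LP optimum of (the current) RMP is fractional, branching on a fractional $x_p$ and continuing column generation inside each branch yields, by the standard correctness argument for branch-and-bound combined with column generation, an optimal integer solution to the full ILP whenever one exists, and a proof of infeasibility otherwise. Since CPLEX is used as the underlying LP/ILP engine, I would appeal to its soundness and completeness to conclude that no integer-feasible solution is overlooked.

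Finally, I would split on the outcome. If the returned optimal integer solution contains no artificial variable with value $\geq 1$, then by construction all constraints \eqref{eq:rmpCon1-1}--\eqref{eq:rmpCon3-1} are satisfied by the selected $x_p$'s alone, so AM outputs a valid optimal solution set $\sigma(x)$. If on the other hand every optimal integer solution assigns a positive value to some artificial variable, then because $M$ is chosen strictly larger than the cost of any combination of genuine partition variables, no purely non-artificial assignment can satisfy the constraints; AM declares infeasibility, which is correct. The main obstacle in writing this out carefully is step two: justifying that the PP really does enumerate (over all branch-and-bound nodes) every partition element needed for optimality, i.e., that the encoding in constraints \eqref{eq:existPP}--\eqref{eq:hierarchy} plus Table~\ref{dlToPP} is a faithful and complete representation of admissible partitions. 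Once this encoding adequacy is granted, the rest of the argument reduces to invoking well-known properties of Dantzig--Wolfe decomposition, branch-and-price, and the CPLEX solver.
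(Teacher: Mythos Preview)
Your proposal is correct and follows essentially the same line as the paper: both argue soundness from the fact that branch-and-price only returns partition elements satisfying the RMP/PP constraints, and completeness from the exhaustiveness of CPLEX as the underlying solver, with infeasibility signalled by surviving artificial variables. Your write-up is in fact considerably more detailed than the paper's own justification, which is the short informal paragraph immediately preceding the proposition rather than a formal proof.
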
 \vspace{-3mm}

\subsection{Example Illustrating Rule Application and ILP formulation  \label{sub:example}}  

Consider the small Tbox \vspace{-2mm}
\begin{gather*}
A\sqsubseteq\exists R.B\sqcap\exists R.\{o1\}\\
B\sqcap\{o1\}\sqsubseteq\bot\\
B\sqsubseteq\exists R.C\sqcap\exists R^{-}.D\sqcap\forall S^{-}.\{o2\}\\
C\sqcap D\sqsubseteq\bot\\
C\sqsubseteq\exists R.E \\
E\sqsubseteq\forall S^{-}.\left\{ o1\right\} 
\end{gather*}
 with $N_{R}=\{R,S\}$, $\{A,B,C,D,E\} \subseteq N_{C},\{o1,o2\} \subseteq N_{o}$,  and $R\sqsubseteq S\in\mathcal{R}$. 
For the sake of better readability, we apply in this example lazy unfolding \cite{baader1994empirical,horrocks1998using}.
 
\begin{enumerate}
\item We start with root node $x$ and its label $\mathcal{L}(x)=\{A\}$ and by
unfolding $A$ and applying the $\sqcap$-Rule we get $\mathcal{L}(x)=\{A,\exists R.B,\exists R.\{o1\}\}$. 

\item Since $\{\exists R.B,\exists R.\{o1\}\} \subseteq \mathcal{L}(x)$, AM generates a corresponding set of inequlities and applies ILP considering known subsumption and disjointness. 

\item For solving these inequalities, RMP starts with
artificial variables, $\mathcal{P}'$ is initially empty, and 
$Q_{\exists}=\{R_{B},R_{o1}\}$, $Q_{\forall}=\emptyset$ and $Q_{o}=\{I_{o1}\}$ (see Fig.\ \ref{fig:ilp-1}).  The objective of (PP 1a) uses the dual values from (RMP 1a). For each at-least
restriction a constraint (e.g., $\exists R.B \leadsto r_{R_{B}}-b_{B}\leq 0$)
is added to (PP 1a), which indicates that if $r_{R_{B}}=1$ then a
variable $b_{B}$ will also be 1. Constraint ($i$) ensures that $B$
and $o1$ cannot exist in same partition element. Constraint ($ii$)
ensures the semantics of nominals.

\begin{figure}[tp]
\begin{minipage}[t][1\totalheight][c]{1\columnwidth}%
\begin{flushleft}
{\small{}}%
\shadowbox{\begin{minipage}[t][1\totalheight][c]{0.46\columnwidth}%
\begin{center}
\textbf{\small{}RMP 1a}{\small{}}\\
{\small{}}%
\begin{minipage}[t]{1\columnwidth}%
\begin{flushleft}
\vspace{-3mm}{\small{}Min \setlength\abovedisplayskip{0pt} \vspace{-2mm}
\[
10h_{R_{B}}+10h_{R_{o1}}+10h_{I_{o1}}\vspace{-2mm}
\]
 }\vspace{-4mm}{\small{}Subject to:}
\par\end{flushleft}%
\end{minipage}{\small{}}\\
{\small{}}%
\begin{minipage}[t][1\totalheight][c]{0.99\columnwidth}%
\begin{center}
{\small{}$\begin{gathered}h_{R_{B}}\geq1\\
h_{R_{o1}}\geq1\\
h_{I_{o1}}=1
\end{gathered}
$}
\par\end{center}%
\end{minipage}{\small{}}\\
{\small{}}%
\fbox{\begin{minipage}[t][1\totalheight][c]{0.99\columnwidth}%
\textbf{\small{}Solution:}{\small{} $cost=30$, $h_{R_{B}}=1$, \newline $h_{R_{o1}}=1,  h_{I_{o1}}=1$}{\small \par}

\textbf{\small{}Duals}{\small{}: $\pi_{R_{B}}=10,\pi_{R_{o1}}=10,\omega_{I_{o1}}=10$}{\small \par}%
\end{minipage}}
\par\end{center}%
\end{minipage}}{\small{}}%
\shadowbox{\begin{minipage}[t][1\totalheight][c]{0.45\columnwidth}%
\begin{center}
\textbf{\small{}PP 1a} \vspace{0.85mm}{\small{}}\\
{\small{}}%
\begin{minipage}[t]{1\columnwidth}%
\begin{flushleft}
\vspace{-3mm}{\small{}Min \setlength\abovedisplayskip{0pt}
\[
b_{B}+b_{o1}-10r_{R_{B}}-10r_{R_{o1}}-10r_{I_{o1}} \vspace{-0.80mm}
\]
 }{\small{}Subject to:}\vspace{0.5mm}
\par\end{flushleft}%
\end{minipage}{\small{}}\\
{\small{}}%
\begin{minipage}[t][1\totalheight][c]{1\columnwidth}%
\begin{center}
{\small{}}%
\begin{tabular}{c|c}
{\small{}$\begin{gathered}r_{R_{B}}-b_{B}\leq 0\\
r_{R_{o1}}-b_{o1}\leq 0
\end{gathered}
$} ~ & ~ {\small{}$\begin{gathered}\begin{aligned}b_{B}+b_{o1} & \leq 1\;~~(i)\\
r_{I_{o1}}-b_{o1} &= 0\; ~~ (ii)
\end{aligned}
\end{gathered}
$}\tabularnewline
\end{tabular}
\par\end{center}%
\end{minipage}{\small{}}\\
\vspace{1mm}{\small{}}%
\fbox{\begin{minipage}[t][1\totalheight][c]{0.99\columnwidth}%
\textbf{\small{}Solution:}{\small{} $cost=-19$, $r_{R_{B}}=0$,\newline $r_{R_{o1}}=1$,
$r_{I_{o1}}=1$, $b_{B}=0$, $b_{o1}=1$}%
\end{minipage}}
\par\end{center}%
\end{minipage}}
\par\end{flushleft}%
\end{minipage}{\small{} }{\small \par}
\caption{Node $x$: First ILP iteration}
\label{fig:ilp-1}  \vspace{-2mm}
\end{figure}

\item The values of $r_{R_{o1}},r_{I_{o1}}$ are 1 in (PP 1a), therefore, the variable $x_{R_{o1}I_{o1}}$ is
added to (RMP 1b). Since only one $b$ variable (i.e.,
$b_{o1}$) is 1, the cost of $x_{R_{o1}I_{o1}}$ is 1. $\mathcal{P}'=\{\{R_{o1},I_{o1}\}\}$
and the value of the objective function is reduced from 30
in (RMP 1a) to 11 in (RMP 1b).

\begin{figure}[tp]
\begin{minipage}[t][1\totalheight][c]{1\columnwidth}%
\begin{flushleft}
{\small{}}%
\shadowbox{\begin{minipage}[t][1\totalheight][c]{0.48\columnwidth}%
\begin{center}
\textbf{\small{}RMP 1b}{\small{}}\\
{\small{}}%
\begin{minipage}[t]{1\columnwidth}%
\begin{flushleft}
\vspace{-5mm}{\small{}Min \setlength\abovedisplayskip{0pt} 
\[
x_{R_{o1}I_{o1}}+10h_{R_{B}}+10h_{R_{o1}}+10h_{I_{o1}}
\]
 }\vspace{-4mm}{\small{}Subject to:}
\par\end{flushleft}%
\end{minipage}{\small{}}
\hspace*{2mm}
{\small{}}%
\begin{minipage}[t][1\totalheight][c]{0.99\columnwidth}%
\begin{center}
{\small{}\setlength\abovedisplayskip{0pt} }%
\begin{tabular}{c}
{\small{}$\;$ $\begin{aligned} & \:h_{R_{B}}\geq1\\
 & x_{R_{o1}I_{o1}}+h_{R_{o1}}\geq1\\
 & x_{R_{o1}I_{o1}}+h_{I_{o1}}=1
\end{aligned}
$ }\tabularnewline
\end{tabular}
\par\end{center}%
\end{minipage}{\small{}}\\
{\small{}}%
\fbox{\begin{minipage}[t][1\totalheight][c]{0.99\columnwidth}%
\textbf{\small{}Solution:}{\small{} $cost=11$, \, $x_{R_{o1}I_{o1}}=1$, \, $h_{R_{B}}=1$,  $h_{R_{o1}}$, $h_{I_{o1}}=0$}{\small \par}

\textbf{\small{}Duals}{\small{}: $\pi_{R_{B}}=10,\, \pi_{R_{o1}}=10,\, \omega_{I_{o1}}=-9$}{\small \par}%
\end{minipage}}
\par\end{center}%
\end{minipage}}{\small{}}%
\shadowbox{\begin{minipage}[t][1\totalheight][c]{0.44\columnwidth}%
\begin{center}
\textbf{\small{}PP 1b} \vspace{0.85mm}{\small{}}\\
{\small{}}%
\begin{minipage}[t]{1\columnwidth}%
\begin{flushleft}
\vspace{-4mm}{\small{}Min \vspace{1mm} \setlength\abovedisplayskip{0pt} 
\[
b_{B}+b_{o1}-10r_{R_{B}}-10r_{R_{o1}}+9r_{I_{o1}} \vspace{-0.30mm}
\]
 }\vspace{-0mm}{\small{}Subject to:} \vspace{1mm}
\par\end{flushleft}%
\end{minipage}{\small{}}\\
{\small{}}%
\begin{minipage}[t][1\totalheight][c]{1\columnwidth}%
\begin{center}
{\small{}\setlength\abovedisplayskip{0pt}}%
\begin{tabular}{c|c}
{\small{}$\begin{gathered}r_{R_{B}}-b_{B}\leq 0\\
r_{R_{o1}}-b_{o1}\leq 0
\end{gathered}
$} ~ & ~ {\small{}$\begin{gathered}\begin{aligned}b_{B}+b_{o1} & \leq 1\;~~(i)\\
r_{I_{o1}}-b_{o1} &= 0\; ~~ (ii)
\end{aligned}
\end{gathered}
$}\tabularnewline
\end{tabular}
\par\end{center}%
\end{minipage}{\small{}}\\
\vspace{1mm}{\small{}}%
\fbox{\begin{minipage}[t][1\totalheight][c]{0.99\columnwidth}%
\textbf{\small{}Solution:}{\small{} $cost=-9$, $r_{R_{B}}=1$,\newline $r_{R_{o1}}=0$,
$r_{I_{o1}}=0$, $b_{B}=1$, $b_{o1}=0$}%
\end{minipage}}
\par\end{center}%
\end{minipage}}
\par\end{flushleft}%
\end{minipage}
\caption{Node $x$: Second ILP iteration}
\label{fig:ilp-2}  \vspace{-4mm}
\end{figure}

\item As the value of $r_{R_{B}}$ is 1 in (PP 1b), the variable $x_{R_{B}}$ is added to
 (RMP 1c). $\mathcal{P}'=\{\{R_{o1}$, $I_{o1}\},\{R_{B}\}\}$
and the cost is further reduced from 11
in (RMP 1b) to 2 in (RMP 1c).

\item All artificial variables in (RMP 1c) are zero which might indicate that we
have reached a feasible solution. The reduced cost of
(PP 1c) is not negative anymore which
means that (RMP 1c) cannot be improved further. Therefore, AM terminates after third ILP iteration and 
returns the optimal solution $\sigma(x)=\{\left\langle \left\{ R\right\}, \left\{ o1\right\} ,1\right\rangle$, $\left\langle \left\{ R\right\} ,\left\{ B\right\} ,1\right\rangle \}$.

\begin{figure}[tp]
\begin{minipage}[t][1\totalheight][c]{1\columnwidth}%
\begin{flushleft}
{\small{}}%
\shadowbox{\begin{minipage}[t][1\totalheight][c]{0.46\columnwidth}%
\begin{center}
\textbf{\small{}RMP 1c}\\
{\small{}}%
\begin{minipage}[t]{1\columnwidth}%
\begin{flushleft}
\vspace{-5mm}{\small{}Min \setlength\abovedisplayskip{0pt}
\[
x_{R_{o1}I_{o1}}+x_{R_{B}}+10h_{R_{B}}+10h_{R_{o1}}+10h_{I_{o1}}
\]
 }\vspace{-0mm}{\small{}Subject to:}
\par\end{flushleft}%
\end{minipage}\\
{\small{}}%
\begin{minipage}[t][1\totalheight][c]{0.99\columnwidth}%
\begin{center}
{\small{}\setlength\abovedisplayskip{0pt}}%
\begin{tabular}{c}
{\small{}$\begin{aligned} & x_{R_{B}}+h_{R_{B}}\geq1\\
 & x_{R_{o1}I_{o1}}+h_{R_{o1}}\geq1\\
 & x_{R_{o1}I_{o1}}+h_{I_{o1}}=1
\end{aligned}
$ }\tabularnewline
\end{tabular}
\par\end{center}%
\end{minipage}\\
{\small{}}%
\fbox{\begin{minipage}[t][1\totalheight][c]{0.99\columnwidth}%
\textbf{\small{}Solution:}{\small{} $cost=2$, $x_{R_{o1}I_{o1}}=1$,
$x_{R_{B}}=1$, $h_{R_{B}},h_{R_{o1}},h_{I_{o1}}=0$ }{\small \par}

\textbf{\small{}Duals}{\small{}: $\pi_{R_{B}}=1,\pi_{R_{o1}}=10,\omega_{I_{o1}}=-9$}{\small \par}%
\end{minipage}}
\par\end{center}%
\end{minipage}}{\small{}}%
\shadowbox{\begin{minipage}[t][1\totalheight][c]{0.45\columnwidth}%
\begin{center}
\textbf{\small{}PP 1c}\\
\vspace{3mm}{\small{}}%
\begin{minipage}[t]{1\columnwidth}%
\begin{flushleft}
\vspace{-5mm}{\small{}Min \vspace{1.2mm} \setlength\abovedisplayskip{0pt}
\[
b_{B}+b_{o1}-1r_{R_{B}}-10r_{R_{o1}}+9r_{I_{o1}}
\]
}\vspace{-0mm}{\small{}Subject to:} \vspace{2mm}
\par\end{flushleft}%
\end{minipage}\\
{\small{}}%
\begin{minipage}[t][1\totalheight][c]{1\columnwidth}%
\begin{center}
{\small{}\setlength\abovedisplayskip{0pt}}%
\begin{tabular}{c|c}
{\small{}$\begin{gathered}r_{R_{B}}-b_{B}\leq 0\\
r_{R_{o1}}-b_{o1}\leq 0
\end{gathered}
$} ~ & ~ {\small{}$\begin{gathered}\begin{aligned}b_{B}+b_{o1} & \leq 1\;~~(i)\\
r_{I_{o1}}-b_{o1} &= 0\; ~~ (ii)
\end{aligned}
\end{gathered}
$}\tabularnewline
\end{tabular}
\par\end{center}%
\end{minipage}\\
\vspace{1.5mm}{\small{}}%
\fbox{\begin{minipage}[t][1\totalheight][c]{0.99\columnwidth}%
\textbf{\small{}Solution:}{\small{} } \vspace{1mm}{\small \par}

{\small{}$cost=0$, all variables are 0.}{\small \par}%
\end{minipage}}
\par\end{center}%
\end{minipage}}
\par\end{flushleft}%
\end{minipage}
\caption{Node $x$: Third ILP iteration}
\label{fig:ilp-3}  \vspace{-4mm}
\end{figure}

\item The $\mathit{fil}$-Rule creates two new nodes $x_{1}$ and $x_{2}$ with $\mathcal{L}(x_{1})\leftarrow\{o1\}$,
$\mathcal{L}(x_{2})\leftarrow\{B\}$, $\card{x_{1}}\leftarrow1$ and $\card{x_{2}}\leftarrow1$. 

\item The $e$-Rule creates edges $\left\langle x,x_{1}\right\rangle $
and $\left\langle x,x_{2}\right\rangle $ with $\mathcal{L}\left(\left\langle x,x_{1}\right\rangle \right)\leftarrow\{R,S\}$
and $\mathcal{L}\left(\left\langle x,x_{2}\right\rangle \right)\leftarrow\{R,S\}$
(because $R\sqsubseteq S\in\mathcal{R}$). It also creates back edges
$\left\langle x_{1},x\right\rangle $ and $\left\langle x_{2},x\right\rangle $
with $\mathcal{L}\left(\left\langle x_{1},x\right\rangle \right)\leftarrow\{R^{-},S^{-}\}$
and $\mathcal{L}\left(\left\langle x_{2},x\right\rangle \right)\leftarrow\{R^{-},S^{-}\}$.

\item By unfolding $B$ in the label of $x_{2}$ and by applying the $\sqcap$-Rule
we get $\mathcal{L}(x_{2})=\{B,\exists R.C,\exists R^{-}.D,\forall S^{-}.\{o2\}\}$. 

\item The $\mathit{inverse}$-Rule encodes information about existing $R^{-}$-neighbour $x$ 
of $x_{2}$ by adding a tuple $\left\langle x,\{R^{-},S^{-}\}\right\rangle $ to  $\mathcal{B}(x_{2})$.

\item AM uses $\{\exists R.C,\exists R^{-}.D\}$ to start ILP. Due to lack of space we cannot provide the complete RMP and
PP solution process here. Since $R\sqsubseteq S$, the universal
restriction $\forall S^{-}.\left\{ o2\right\} $ is ensured by adding
the following inequalities to PP: $r_{R_{\top}^{-}}-r_{S_{\top}^{-}}\leq 0$,  $r_{S_{\top}^{-}}-b_{o2}\leq 0$, 
and for all $r_{R_{q}^{-}}$ we added an equality $r_{R_{q}^{-}}-r_{R_{\top}^{-}}\leq0$.
Therefore, whenever $r_{R_{q}^{-}}=1$ the values of $r_{R_{\top}^{-}},r_{S_{\top}^{-}},b_{o2}=1$ . 

\item Since $\mathcal{B}(x_{2})$ contains $\left\langle x,\{R^{-},S^{-}\}\right\rangle $, AM adds node $x$ in solution. Therefore, AM returns the solution $\sigma(x_{2})=\{\left\langle \left\{ R\right\} ,\left\{ C\right\} ,1\right\rangle ,\left\langle \left\{ R^{-},S^{-}\right\} ,\left\{D,o2\right\} ,1 ,\left\{x\right\} \right\rangle \}$. 

\item The $\mathit{fil}$-Rule creates only one new node $x_{3}$ with $\mathcal{L}(x_{3})\leftarrow\{C\}$
and $\card{x_{3}}\leftarrow1$, and updates the label of node $x$ with $\mathcal{L}(x)\leftarrow\{D,o2\}$. 

\item The $e$-Rule creates edges $\left\langle x_{2},x_{3}\right\rangle $
and $\left\langle x_{3},x_{2}\right\rangle $ with $\mathcal{L}\left(\left\langle x_{2},x_{3}\right\rangle \right)\leftarrow\{R,S\}$
and $\mathcal{L}\left(\left\langle x_{3},x_{2}\right\rangle \right)\leftarrow\{R^{-},S^{-}\}$. 

\item By unfolding $C$ in the label of $x_{3}$ we get $\mathcal{L}(x_{3})=\{C,\exists R.E\}$.
AM gives solution $\sigma(x_{3})=\{\left\langle \left\{ R\right\} ,\left\{ E\right\} ,1\right\rangle \}$.
The $\mathit{fil}$-Rule creates node $x_{4}$ with $\mathcal{L}(x_{4})\leftarrow\{E\}$
and $\card{x_{4}}\leftarrow1$. The $e$-Rule creates edges $\left\langle x_{3},x_{4}\right\rangle $
and $\left\langle x_{4},x_{3}\right\rangle $ with $\mathcal{L}\left(\left\langle x_{3},x_{4}\right\rangle \right)\leftarrow \{R,S\}$
and $\mathcal{L}\left(\left\langle x_{4},x_{3}\right\rangle \right)\leftarrow\{R^{-},S^{-}\}$. 

\item $\mathcal{L}(x_{4})=\{E,\forall S^{-}.\left\{ o1\right\} \}$ and after unfolding
$E$ the $\forall$-Rule adds $o1$ to $\mathcal{L}(x_{3})$.
However, $o1$ already occurs in $\mathcal{L}(x_{1})$ and $x_{1}\neq x_{3}$.
Therefore, the $\mathit{nom_\mathit{merge}}$-Rule merges node $x_{3}$ into node
$x_{1}$. 
\item Since no more rules are applicable, the tableau algorithm terminates. 
\end{enumerate} \vspace{-5mm}

\section{Performance Evaluation }

We developed a prototype system called Cicada\footnote{System and test ontologies: \url{https://users.encs.concordia.ca/~haarslev/Cicada}} that implements our
calculus as proof of concept. Besides the use of ILP and branch-and-price Cicada only implements a few standard optimization techniques such as lazy unfolding \cite{baader1994empirical,horrocks1998using},
nominal absorption \cite{horrocks1998using}, and dependency directed
backtracking \cite{baader2003description} as well as a \textit{ToDo list} architecture \cite{tsarkov2006fact++} to control the application of the expansion rules. Cicada might not perform well for $\mathcal{SHOI}$ ontologies that require other optimization
techniques. 

Therefore, we built a set of synthetic test cases to empirically
evaluate Cicada. Figure \ref{evaluation} presents some metrics
of benchmark ontologies and evaluation results. We compared Cicada
with major OWL reasoners such as FaCT++ (1.6.5) \cite{tsarkov2006fact++},
HermiT (1.3.8) \cite{shearer2008hermit}, and Konclude (0.6.2) \cite{steigmiller2014konclude}.

The first benchmark (see top part of Figure \ref{evaluation}) uses two real-world ontologies. The ontology EU-Members (adapted from \cite{faddoul2010algebraic}) models 28 members of European Union (EU) whereas
CA-Provinces models 10 provinces of Canada. We added nominals requiring 
29 EU members and 11 Canadian provinces respectively. 
The results show that only Cicada can identify the inconsistency of EU-Members  within the time limit. Moreover, Cicada is more than two orders of magnitude faster than FaCT++ in identifying the inconsistency of CA-Provinces.

The second benchmark (see bottom part of Figure \ref{evaluation})
consists of small synthetic test ontologies that are using a variable $n$ for representing the number of nominals. 
In order to test the effect of increased number of nominals we defined concept $C$ and $A$ as
$C\sqsubseteq\exists R^{-}.A$ and $A\sqsubseteq\exists R.X_{1}\sqcap,...,\sqcap\exists R.X_{n}\sqcap\forall R.\{o_{1},...,o_{n}\}$. 
Nominals $o_{1},...,o_{n}$ and concepts $X_{1},...,X_{n}$ are declared as pairwise
disjoint. The first set consists of
 consistent ontologies in which we declared $C$ and $X_{1},...,X_{n-1}$ as pairwise
disjoint. The second set consists of inconsistent
ontologies in which we declared $C$ and $X_{1},...,X_{n}$ as pairwise disjoint.
Only Cicada can process the
ontologies with more than 10 nominals within the time limit. 

\begin{figure}[t]
\begin{centering}
\begin{tabular}{|l|c|c|c|c|c|c|c|}
\hline 
\multirow{2}{*}{\textbf{Ontology Name}} & \multicolumn{3}{c|}{\textbf{Ontology Metrics}} & \multicolumn{4}{c|}{\textbf{Evaluation Results }}\tabularnewline
\cline{2-8} 
 & \textbf{\#Axioms } & \textbf{\#Concepts } & \textbf{\#Ind } & \textbf{~Cic~} & \textbf{~FaC~} & \textbf{~Her~} & \textbf{~Kon~}\tabularnewline
\hline 
EU-Members & {\small{}67} & {\small{}32} & {\small{}28} & {\small{}4.86} & {\small{}TO} & {\small{}TO} & {\small{}TO}\tabularnewline
\cline{2-8} 
CA-Provinces & {\small{}32} & {\small{}14} & {\small{}11} & {\small{}2.85} & {\small{}316.4} & {\small{}TO} & {\small{}TO}\tabularnewline
\hline 
\end{tabular}\\
\begin{tabular}{|c|c|c|c|c|c|c|c|c|c|c|c|}
\hline 
 & \multicolumn{3}{c|}{} & \multicolumn{4}{c|}{\textbf{TestOnt-Cons}} & \multicolumn{4}{c|}{\textbf{TestOnt-InCons}}\tabularnewline
\cline{5-12} 
\multirow{2}{*}{\textbf{n}} & \multicolumn{3}{c|}{\textbf{Ontology Metrics}} & \multicolumn{4}{c|}{\textbf{\small{}Evaluation Results}} & \multicolumn{4}{c|}{\textbf{\small{}Evaluation Results}}\tabularnewline
\cline{2-12} 
 & \textbf{\#Axioms} & \textbf{\#Concepts} & \textbf{\#Ind} & \textbf{Cic } & \textbf{FaC } & \textbf{Her } & \textbf{Kon} & \textbf{Cic } & \textbf{FaC } & \textbf{Her } & \textbf{Kon}\tabularnewline
\hline 
{\small{}40} & {\small{}92} & {\small{}43} & {\small{}41} & {\small{}3.39} & {\small{}TO} & {\small{}TO} & {\small{}TO} & {\small{}4.41} & {\small{}TO} & {\small{}TO} & {\small{}TO}\tabularnewline
{\small{}20} & {\small{}53} & {\small{}23} & {\small{}21} & {\small{}1.21} & {\small{}TO} & {\small{}TO} & {\small{}TO} & {\small{}3.16} & {\small{}TO} & {\small{}TO} & {\small{}TO}\tabularnewline
{\small{}10} & {\small{}33} & {\small{}13} & {\small{}11} & {\small{}0.91} & {\small{}TO} & {\small{}TO} & {\small{}TO} & {\small{}2.68} & {\small{}401.7} & {\small{}TO} & {\small{}TO}\tabularnewline
{\small{}7} & {\small{}27} & {\small{}10} & {\small{}8} & {\small{}0.64} & {\small{}1.26} & {\small{}3.47} & {\small{}3.56} & {\small{}2.32} & {\small{}1.48} & {\small{}3.70} & {\small{}3.71}\tabularnewline
{\small{}5} & {\small{}23} & {\small{}8} & {\small{}6} & {\small{}0.41} & {\small{}0.02} & {\small{}0.13} & {\small{}0.24} & {\small{}2.21} & {\small{}0.12} & {\small{}0.46} & {\small{}0.14}\tabularnewline
\hline 
\end{tabular}
\par\end{centering}
\vspace{-2mm}

\caption{Metrics of Benchmark Ontologies and Evaluation Results with runtime
in seconds and a timeout of 1000 seconds (TO=timeout, \#=Number of...,
Ind=Individuals, Cic=Cicada, FaC=FaCT++, Her=HermiT, Kon=Konclude)}
\label{evaluation} \vspace{-3mm}
\end{figure}

\section{Conclusion} 

We presented a tableau-based algebraic calculus for handling the numerical
restrictions imposed by nominals, existential and universal restrictions, and their interaction with inverse roles. These numerical restrictions are translated into linear
inequalities which are then solved by using algebraic reasoning. The
algebraic reasoning is based on a branch-and-price technique that
either computes an optimal solution, or detects infeasibility. An
empirical evaluation of our calculus showed that it performs better
on ontologies having a large number of nominals, whereas other reasoners
were unable to classify them within a reasonable amount of time. In future work,
we will extend the technique presented here to $\mathcal{SHOIQ}$. 

\newpage
\bibliographystyle{splncs04}
\bibliography{bibexport}
\newpage
\appendix

\section{Integer Linear Programming}
\textit{Linear Programming} (LP) is the study of determining the
minimum (or maximum) value of a linear function $f(x_{1},x_{2},...,x_{n})$
subject to a finite number of linear constraints. These constraints
consist of linear inequalities involving variables $x_{1},x_{2},...,x_{n}$
\cite{chvatal1983linear}.
If all of the variables are required to have integer values, then
the problem is called \textit{Integer Programming} (IP) or \textit{Integer
Linear Programming} (ILP).

\textit{Simplex method}, proposed by G. B. Dantzig \cite{dantzig1955generalized},
is one of the most frequently used methods to solve LP problems. Although
LP is known to be solvable in polynomial time \cite{khachiyan1980polynomial},
the simplex method can behave exponentially for certain problems.
Karmarkar's algorithm \cite{karmarkar1984new} is the first efficient
polynomial time method for solving a linear program. However, all
these approaches are not reasonably efficient in solving problems
with a huge number of variables. Therefore, the column generation technique is used to solve problems with a huge
number of variables. It decomposes
a large LP into the master problem and the subproblem. It only considers a small subset of variables. 

\subsection{Branch-and-price Method}

The column generation technique may not necessarily give an integral
solution for an LP relaxation. Therefore, we use
branch-and-price method \cite{barnhart1998branch}  
that is hybrid of column generation and branch-and-bound method \cite{desrosiers1984routing}. We decompose our problem into two subproblems:
(i) restricted master problem (RMP), and (ii) pricing problem (PP).
Number restrictions
are represented in RMP as inequalities, with a restricted set of variables. We employ this technique by mapping number restrictions to linear inequality systems using a column generation ILP formulation. 
These inequalities are then solved for deciding the
satisfiability of the numerical restrictions. \vspace{-3mm}

\subsubsection{Column Generation ILP Formulation:}

In the following, we use a Tbox $\mathcal{T}$ and its $\mathcal{R}$, a completion graph $G$, a decomposition set $Q$ and a partitioning $\mathcal{P}$
that is the power set of $Q$ containing all subsets of $Q$ except the
empty set. Each partition element $p\in\mathcal{P}$ represents the
intersection of its elements.
The ILP model associated with the feasibility problem
of $Q$ is as follows: \vspace{2mm}

\hspace{3mm}\text{Min}
\begin{equation}
 \sum_{p\in\mathcal{P}'} cost_{p}x_{p}  \label{eq:rmp_objA-1b}
\end{equation}
\vspace*{-5mm}
\hspace{3mm}\text{Subject to}
\begin{alignat}{2}
\sum_{p\in\mathcal{P}'}a_{p}^{R_{q}}x_{p} & \geq &\ 1 & \quad  R_{q}\in Q_{\exists} \label{eq:rmpCon1-1b}\\
\sum_{p\in\mathcal{P}'}a_{p}^{I_{q}}x_{p} & = &\ 1 & \quad I_{q}\in Q_{o} \label{eq:rmpCon3-1b}
\end{alignat}
\vspace*{-4mm}
\begin{equation}
x_{p}\in\mathbb{R}^{+} \text{ with } p\in\mathcal{P}'\label{eq:lpEx1c-2-1b}
\end{equation}
\begin{equation*}
a_{p}^{R_{q}},a_{p}^{I_{q}}\in\{0,1\},\, \text{ with } R_{q}\in Q_{\exists},\, I_{q}\in Q_{o}\label{eq:lpEx1c-2-1-1b}
\end{equation*}

\noindent here, $Q=Q_{\exists}\cup Q_{\forall} \cup Q_{o}$, where
$Q_{\exists}$ ($Q_{\forall}$) contains existential (universal) restrictions and $Q_{o}$ contains
all related nominals (see Section \ref{sub:Generating-Inequalities} for details). A decision variable $x_{p}$ represents the elements of the partition
element $p\in\mathcal{P}$. $a_{p}$ is associated with each variable
$x_{p}$ and $a_{p}^{R_{q}}$ indicates whether an $R$-successor that
is an instance of $q$ exists in the subset $p$. The weight $cost_{p}$ is the
cost of selecting $p$ and is defined as the number of concepts present
in $p$. Since we minimize, $cost_{p}$ ensures the partition element only contains the minimum
number of concepts that are needed to satisfy all the axioms.
Constraint \eqref{eq:rmpCon1-1b}
encodes existential restrictions and \eqref{eq:rmpCon3-1b}
 numerical restrictions imposed by nominals (i.e., $\card{\{o\}^{I}}=1$).
Constraint (\ref{eq:lpEx1c-2-1b}) states the integrality condition relaxed from $x_{p}\in\mathbb{Z}^{+}$ to $x_{p}\in\mathbb{R}^{+}$.

In order to begin the solution process, we need to find an initial set
of columns satisfying the Constraints \eqref{eq:rmpCon1-1b} and \eqref{eq:rmpCon3-1b}. However, it can be a cumbersome task to find an initial feasible solution. Therefore, we initially start our RMP with artificial variables $h$ (as shown in Section \ref{sub:Generating-Inequalities}) to obtain an initial artificial feasible inequality system. Each artificial variable corresponds to
an element in $Q_{\exists} \cup Q_{o}$ such that $h_{R_{q}}$, $R_{q}\in Q_{\exists}$
and $h_{I_{q}}$, $I_{q}\in Q_{o}$. An arbitrarily large cost $M$
is associated with every artificial variable. The objective of RMP is defined as the sum of
all costs i.e., $ \sum_{p\in\mathcal{P}'} cost_{p}x_{p}+ M\!\!\sum_{R_{q}\in Q_{\exists}} h_{R_{q}}+ M\!\!\sum_{I_{q}\in Q_{o}}h_{I_{q}} $. Since we minimize the objective function, by considering this large cost $M$ one can ensure that as the column generation method
proceeds, the artificial variables will leave the basis. Therefore, in case of feasible set of inequalities, these artificial
variables must not exist in the final solution.
The objective of PP uses the dual values $\pi, \omega$ as coefficients
of the variables that are associated with a potential partition element (shown in Section \ref{sub:Generating-Inequalities}). PP encodes the semantics of universal restrictions, subsumption, disjointness and role hierarchies into inequalities by using binary variables. PP computes a column that can maximally reduce the cost of RMP's objective. Whenever a column with negative reduced cost is found, it is added to RMP. 
The process terminates when PP cannot compute a new column with reduced cost, i.e., when the value of the objective function of PP becomes greater or equal to 0.

\section{Full Reasoning Example}
\label{sec:full-example}
In this Appendix, we provided the detailed explanation of the example presented in Section \ref{sub:example}. Since detailed expansion of the node $x$ has already been provided in Section \ref{sub:example}, we started here by expanding the node  $x_{2}$.
\begin{enumerate}
\item By unfolding $B$ in the label of $x_{2}$ and by applying the $\sqcap$-Rule
we get $\mathcal{L}(x_{2})=\{B,\exists R.C,\exists R^{-}.D,\forall S^{-}.\{o2\}\}$.
\item The $\mathit{inverse}$-Rule adds information about the existing $R^{-}$-neighbour
$x$ of $x_{2}$ by adding a tuple $\left\langle x,\{R^{-},S^{-}\}\right\rangle $ to  $\mathcal{B}(x_{2})$.
\item AM uses $\{\exists R.C,\exists R^{-}.D, \exists R^{-}.X\}$
to start ILP. Here, $\exists R^{-}.X$ is used to handle existing $R^{-}$-edge.
\item RMP starts with artificial variables, $\mathcal{P}'$ is initially empty, and $Q_{\exists}=\{R_{C},R_{D}^{-}$, $R_{X}^{-}\}$,
$Q_{\forall}=\{S_{o2}\}$ and $Q_{o}=\{I_{o2}\}$ (see Fig.\ \ref{fig:ilp-2-a}).
The objective of the (PP 2a) uses the dual values from (RMP 2a). In
(PP 2a), Constraint ($i$) ensures that $C$ and $D$ cannot exist
in same partition element.  Constraint ($ii$) - ($v$) ensures the
semantics of role hierarchy and universal restrictions.
\item The values of $r_{R_{C}}$, $r_{R_{X}^{-}}$, $r_{I_{o2}}$,
$r_{R_{\top}^{-}}$, $r_{S_{\top}^{-}}$ are 1 in (PP 2a), therefore, the variable
$x_{R_{C}R_{X}^{-}I_{o2}}$ is added to (RMP 2b). Since three $b$
variable (i.e., {\small{}$b_{C},b_{X},b_{o2}$}) are 1, the cost of
$x_{R_{C}R_{X}^{-}I_{o2}}$ is 3. The value of the objective function
is reduced from 40 in (RMP 2a) to 13 in (RMP 2b).

\begin{figure}[ht]
\begin{minipage}[t][1\totalheight][c]{1\columnwidth}%
\begin{flushleft}
{\small{}{}}%
\shadowbox{\begin{minipage}[t][1\totalheight][c]{0.95\columnwidth}%
\begin{center}
\textbf{\small{}{}RMP 2a}{\small{}{}}\\
 {\small{}{}}%
\begin{minipage}[t]{1\columnwidth}%
\begin{flushleft}
{\small{}{}Min \setlength{\abovedisplayskip}{0pt} ~~~~~~~~~~~~~~~~~~~
$10h_{R_{C}}+10h_{R_{D}^{-}}+10h_{R_{X}^{-}}+10h_{I_{o2}}$
}\newline \vspace{-4mm}
{\small{}{}Subject to:} 
\par\end{flushleft}%
\end{minipage}{\small{}{}}\\
 {\small{}{}}%
\begin{minipage}[t][1\totalheight][c]{0.99\columnwidth}%
\begin{center}
{\small{}{}$\begin{gathered}h_{R_{C}}\geq1\\
h_{R_{D}^{-}}\geq1\\
h_{R_{X}^{-}}\geq1\\
h_{I_{o2}}=1
\end{gathered}
$} 
\par\end{center}%
\end{minipage}{\small{}{}}\\
 {\small{}{}}%
\fbox{%
\begin{minipage}[t][1\totalheight][c]{0.99\columnwidth}%
\textbf{\small{}{}Solution:}{\small{}{} $cost=40$, $h_{R_{C}}=1$,
$h_{R_{D}^{-}}=1,h_{R_{X}^{-}}=1,h_{I_{o1}}=1$}{\small \par}

\textbf{\small{}{}Duals}{\small{}{}: $\pi_{R_{C}}=10,\pi_{R_{D}^{-}}=10,\pi_{R_{X}^{-}}=10,\omega_{I_{o1}}=10$}{\small \par}%
\end{minipage}} 
\par\end{center}%
\end{minipage}}{\small{}{}}%
\newline
\shadowbox{\begin{minipage}[t][1\totalheight][c]{0.95\columnwidth}%
\begin{center}
\textbf{\small{}{}PP 2a}{\small{}{}}\\
 {\small{}{}}%
\begin{minipage}[t]{1\columnwidth}%
\begin{flushleft}

{\small{}{}Min \setlength{\abovedisplayskip}{0pt} ~~~~~~~~~~~~~~~~~
$b_{C}+b_{D}+b_{X}+b_{o2}-10r_{R_{C}}-10r_{R_{D}^{-}}- 10r_{R_{X}^{-}}-10r_{I_{o2}}$
}\newline
{\small{}{}Subject to:} 
\par\end{flushleft}%
\end{minipage}{\small{}{}}\\
 {\small{}{}}%
\begin{minipage}[t][1\totalheight][c]{1\columnwidth}%
\begin{center}
{\small{}{}}%
\begin{tabular}{c|cc}
{\small{}{}$\begin{gathered}r_{R_{C}}-b_{C}\leq0\\
r_{R_{D}^{-}}-b_{D}\leq0\\
r_{R_{X}^{-}}-b_{X}\leq0\\
r_{I_{o2}}-b_{o2}=0
\end{gathered} 
$} ~~~ & ~~~ {\small{}{}$\begin{gathered}\begin{aligned}b_{C}+b_{D} & \leq1\;~~(i)\\
r_{R_{D}^{-}}-r_{R_{\top}^{-}} & \leq0\;~~(ii)\\
r_{R_{X}^{-}}-r_{R_{\top}^{-}} & \leq0\;~~(iii)\\
r_{R_{\top}^{-}}-r_{S_{\top}^{-}} & \leq0\;~~(iv)\\
r_{S_{\top}^{-}}-b_{o2} & \leq0\;~~(v)
\end{aligned}
\end{gathered}
$} & \, ~~~~~~~~~~~~~~~ (CPP 2a) \tabularnewline
\end{tabular}
\par\end{center}%
\end{minipage}{\small{}{}}\\
 {\small{}{}}%
\fbox{%
\begin{minipage}[t][1\totalheight][c]{0.99\columnwidth}%
\textbf{\small{}{}Solution:}{\small{}{} $cost=-27$, $b_{C}=1$,
$b_{D}=0$, $b_{X}=1$, $b_{o2}=1$, $r_{R_{C}}=1$, $r_{R_{D}^{-}}=0$,
$r_{R_{X}^{-}}=1$, $r_{I_{o2}}=1$, $r_{R_{\top}^{-}}=1$, $r_{S_{\top}^{-}}=1$}%
\end{minipage}} 
\par\end{center}%
\end{minipage}} 
\par\end{flushleft}%
\end{minipage}{\small{}{} }{\small \par}

\caption{Node $x_{2}$: First ILP iteration}
\label{fig:ilp-2-a} 
\end{figure}

\item As the value of $r_{R_{C}}$ is 1, the variable $x_{R_{C}}$ is added
to (RMP 2c). 
\item In third ILP iteration, the values of $r_{R_{D}^{-}}$, $r_{R_{X}^{-}}$,
$r_{I_{o2}}$, $r_{R_{\top}^{-}}$, $r_{S_{\top}^{-}}$ are 1. The
variable $x_{R_{D}^{-}R_{X}^{-}I_{o2}}$ is added to (RMP 2d). Since
three $b$ variables (i.e., $b_{D},b_{X},b_{o2}$) are 1,
the cost of $x_{R_{D}^{-}R_{X}^{-}I_{o2}}$ is 3. The value of the
objective function is reduced from 13 in (RMP 2c) to 4 in (RMP 2d).
\item The values of $r_{R_{D}^{-}}$, $r_{I_{o2}}$, $r_{R_{\top}^{-}}$,$r_{S_{\top}^{-}}$ are 1, therefore, the variable $x_{R_{D}^{-}I_{o2}}$
is added to (RMP 2e). The cost is not reduced further in (RMP 2e). 
\item All artificial variables in (RMP 2e) are zero and the reduced cost
of (PP 2e) is not negative anymore which indicates that the RMP cannot
be improved further. Therefore, AM terminates after the fifth ILP iteration. Since $\mathcal{B}(x_{2})$ contains $\left\langle x,\{R^{-},S^{-}\}\right\rangle $, AM adds node $x$ in solution. Therefore, AM returns the solution $\sigma(x_{2})=\{\left\langle \left\{ R\right\} ,\left\{ C\right\} ,1\right\rangle ,\left\langle \left\{ R^{-}, S^{-}\right\} ,\left\{ D,o2\right\} ,1, \left\{ x\right\}\right\rangle \}$.
\item The $\mathit{fil}$-Rule creates only one new node $x_{3}$ with $\mathcal{L}(x_{3})\leftarrow\{C\}$
and $\card{x_{3}}\leftarrow1$, and updates the label of node $x$ with $\mathcal{L}(x)\leftarrow\{D,o2\}$.
\item The $e$-Rule creates edges $\left\langle x_{2},x_{3}\right\rangle $
and $\left\langle x_{3},x_{2}\right\rangle $ with $\mathcal{L}\left(\left\langle x_{2},x_{3}\right\rangle \right)\leftarrow\{R,S\}$
and $\mathcal{L}\left(\left\langle x_{3},x_{2}\right\rangle \right)\leftarrow\{R^{-},S^{-}\}$.

\begin{figure}[tp]
\begin{minipage}[t][1\totalheight][c]{1\columnwidth}%
\begin{flushleft}
{\small{}{}}%
\shadowbox{\begin{minipage}[t][1\totalheight][c]{0.95\columnwidth}%
\begin{center}
\textbf{\small{}{}RMP 2b}{\small{}{}}\\
 {\small{}{}}%
\begin{minipage}[t]{1\columnwidth}%
\begin{flushleft}
\vspace{-3mm}
{\small{}{}Min \setlength{\abovedisplayskip}{0pt} 
\[
3x_{R_{C}R_{X}^{-}I_{o2}}+10h_{R_{C}}+10h_{R_{D}^{-}}+10h_{R_{X}^{-}}+10h_{I_{o2}}
\]
}\vspace{-4mm}
{\small{}{}Subject to:} 
\par\end{flushleft}%
\end{minipage}{\small{}{}}\\
 {\small{}{}}%
\begin{minipage}[t][1\totalheight][c]{0.99\columnwidth}%
\begin{center}
{\small{}{}$\begin{gathered}x_{R_{C}R_{X}^{-}I_{o2}}+h_{R_{C}}\geq1\\
h_{R_{D}^{-}}\geq1\\
x_{R_{C}R_{X}^{-}I_{o2}}+h_{R_{X}^{-}}\geq1\\
x_{R_{C}R_{X}^{-}I_{o2}}+h_{I_{o2}}=1
\end{gathered}
$} 
\par\end{center}%
\end{minipage}{\small{}{}}\\
 {\small{}{}}%
\fbox{%
\begin{minipage}[t][1\totalheight][c]{0.99\columnwidth}%
\textbf{\small{}{}Solution:}{\small{}{} $cost=13$, $x_{R_{C}R_{X}^{-}I_{o2}}=1,h_{R_{C}}=0$,
$h_{R_{D}^{-}}=1,h_{R_{X}^{-}}=0,h_{I_{o1}}=0$}{\small \par}

\textbf{\small{}{}Duals}{\small{}{}: $\pi_{R_{C}}=0,\pi_{R_{D}^{-}}=10,\pi_{R_{X}^{-}}=10,\omega_{I_{o1}}=-17$}{\small \par}%
\end{minipage}} 
\par\end{center}%
\end{minipage}}{\small{}{}}%
\newline
\shadowbox{\begin{minipage}[t][1\totalheight][c]{0.95\columnwidth}%
\begin{center}
\textbf{\small{}{}PP 2b}{\small{}{}}\\
 {\small{}{}}%
\begin{minipage}[t]{1\columnwidth}%
\begin{flushleft}
\vspace{-2mm}
{\small{}{}Min \setlength{\abovedisplayskip}{0pt} 
\[
b_{C}+b_{D}+b_{X}+b_{o2}-0r_{R_{C}}-10r_{R_{D}^{-}}-10r_{R_{X}^{-}}+17r_{I_{o2}}
\]
}\vspace{-0mm}
{\small{}{}Subject to: ~~~ (CPP 2a)} 
\par\end{flushleft}%
\end{minipage}{\small{}{}}\\
 {\small{}{}}%
\fbox{%
\begin{minipage}[t][1\totalheight][c]{0.99\columnwidth}%
\textbf{\small{}{}Solution:}{\small{}{} $cost=-9$, $b_{C}=1$,
$r_{R_{C}}=1$, all other variables are 0.}%
\end{minipage}} 
\par\end{center}%
\end{minipage}} 
\par\end{flushleft}%
\end{minipage}{\small{}{} }{\small \par}

\caption{Node $x_{2}$: Second ILP iteration}
\label{fig:ilp-2-b} 
\end{figure}

\begin{figure}[tp]
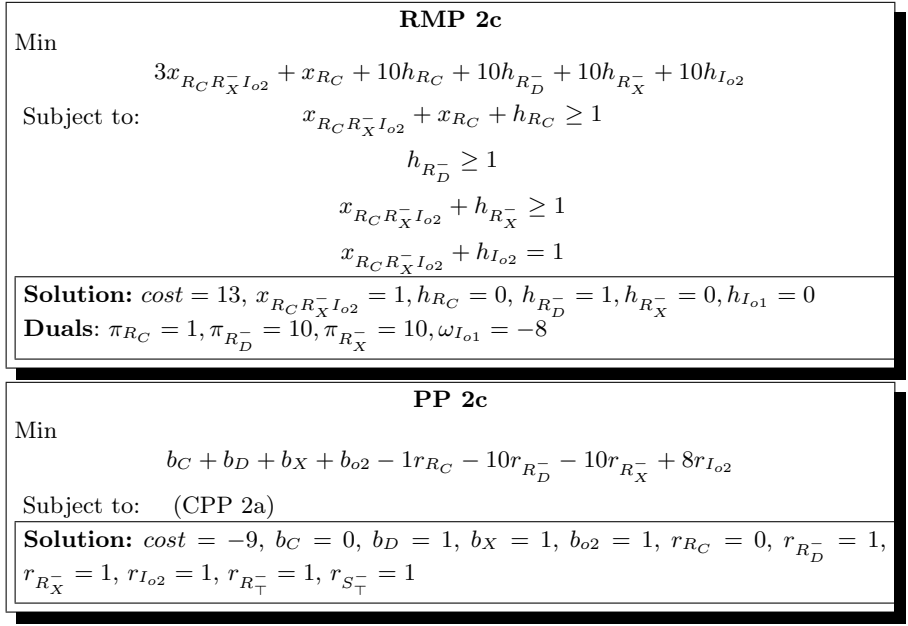

\begin{minipage}[t][1\totalheight][c]{1\columnwidth}%
\begin{flushleft}
{\small{}{}}%
\shadowbox{\begin{minipage}[t][1\totalheight][c]{0.95\columnwidth}%
\begin{center}
\textbf{\small{}{}RMP 2c}{\small{}{}}\\
 {\small{}{}}%
\begin{minipage}[t]{1\columnwidth}%
\begin{flushleft}
\vspace{-3mm}
{\small{}{}Min \setlength{\abovedisplayskip}{0pt} 
\[
3x_{R_{C}R_{X}^{-}I_{o2}}+x_{R_{C}}+10h_{R_{C}}+10h_{R_{D}^{-}}+10h_{R_{X}^{-}}+10h_{I_{o2}}
\]
}\vspace{-4mm}
{\small{}{}Subject to:} 
\par\end{flushleft}%
\end{minipage}{\small{}{}}\\
 {\small{}{}}%
\begin{minipage}[t][1\totalheight][c]{0.99\columnwidth}%
\begin{center}
{\small{}{}$\begin{gathered}x_{R_{C}R_{X}^{-}I_{o2}}+x_{R_{C}}+h_{R_{C}}\geq1\\
h_{R_{D}^{-}}\geq1\\
x_{R_{C}R_{X}^{-}I_{o2}}+h_{R_{X}^{-}}\geq1\\
x_{R_{C}R_{X}^{-}I_{o2}}+h_{I_{o2}}=1
\end{gathered}
$} 
\par\end{center}%
\end{minipage}{\small{}{}}\\
 {\small{}{}}%
\fbox{%
\begin{minipage}[t][1\totalheight][c]{0.99\columnwidth}%
\textbf{\small{}{}Solution:}{\small{}{} $cost=13$, $x_{R_{C}R_{X}^{-}I_{o2}}=1,h_{R_{C}}=0$,
$h_{R_{D}^{-}}=1,h_{R_{X}^{-}}=0,h_{I_{o1}}=0$}{\small \par}

\textbf{\small{}{}Duals}{\small{}{}: $\pi_{R_{C}}=1,\pi_{R_{D}^{-}}=10,\pi_{R_{X}^{-}}=10,\omega_{I_{o1}}=-8$}{\small \par}%
\end{minipage}} 
\par\end{center}%
\end{minipage}}{\small{}{}}%
\newline
\shadowbox{\begin{minipage}[t][1\totalheight][c]{0.95\columnwidth}%
\begin{center}
\textbf{\small{}{}PP 2c}{\small{}{}}\\
 {\small{}{}}%
\begin{minipage}[t]{1\columnwidth}%
\begin{flushleft}
\vspace{-2mm}
{\small{}{}Min \setlength{\abovedisplayskip}{0pt} 
\[
b_{C}+b_{D}+b_{X}+b_{o2}-1r_{R_{C}}-10r_{R_{D}^{-}}-10r_{R_{X}^{-}}+8r_{I_{o2}}
\]
}\vspace{-0mm}
{\small{}{}Subject to:~~~ (CPP 2a)}  
\par\end{flushleft}%
\end{minipage}{\small{}{}}\\
 {\small{}{}}%
\fbox{%
\begin{minipage}[t][1\totalheight][c]{0.99\columnwidth}%
\textbf{\small{}{}Solution:}{\small{}{} $cost=-9$, $b_{C}=0$,
$b_{D}=1$, $b_{X}=1$, $b_{o2}=1$, $r_{R_{C}}=0$, $r_{R_{D}^{-}}=1$,
$r_{R_{X}^{-}}=1$, $r_{I_{o2}}=1$, $r_{R_{\top}^{-}}=1$, $r_{S_{\top}^{-}}=1$}%
\end{minipage}} 
\par\end{center}%
\end{minipage}} 
\par\end{flushleft}%
\end{minipage}{\small{}{} }{\small \par}

\caption{Node $x_{2}$: Third ILP iteration}
\label{fig:ilp-2-c} 
\end{figure}
 
\begin{figure}[tp]
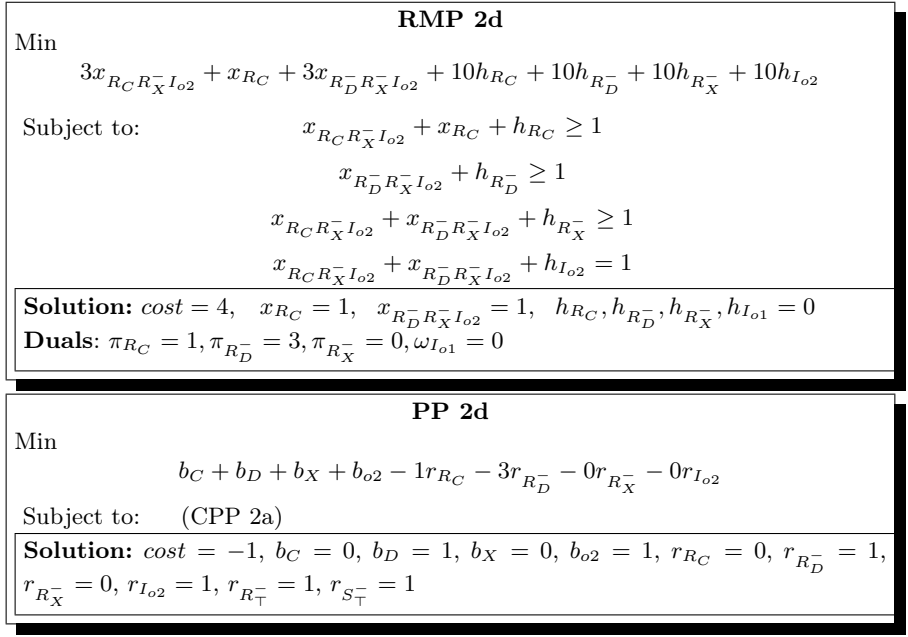

\begin{minipage}[t][1\totalheight][c]{1\columnwidth}%
\begin{flushleft}
{\small{}{}}%
\shadowbox{\begin{minipage}[t][1\totalheight][c]{0.95\columnwidth}%
\begin{center}
\textbf{\small{}{}RMP 2d}{\small{}{}}\\
 {\small{}{}}%
\begin{minipage}[t]{1\columnwidth}%
\begin{flushleft}
\vspace{-3mm}
{\small{}{}Min \setlength{\abovedisplayskip}{0pt} 
\[
3x_{R_{C}R_{X}^{-}I_{o2}}+x_{R_{C}}+3x_{R_{D}^{-}R_{X}^{-}I_{o2}}+10h_{R_{C}}+10h_{R_{D}^{-}}+10h_{R_{X}^{-}}+10h_{I_{o2}}
\]
}\vspace{-4mm}
{\small{}{}Subject to:} 
\par\end{flushleft}%
\end{minipage}{\small{}{}}\\
 {\small{}{}}%
\begin{minipage}[t][1\totalheight][c]{0.99\columnwidth}%
\begin{center}
{\small{}{}$\begin{gathered}x_{R_{C}R_{X}^{-}I_{o2}}+x_{R_{C}}+h_{R_{C}}\geq1\\
x_{R_{D}^{-}R_{X}^{-}I_{o2}}+h_{R_{D}^{-}}\geq1\\
x_{R_{C}R_{X}^{-}I_{o2}}+x_{R_{D}^{-}R_{X}^{-}I_{o2}}+h_{R_{X}^{-}}\geq1\\
x_{R_{C}R_{X}^{-}I_{o2}}+x_{R_{D}^{-}R_{X}^{-}I_{o2}}+h_{I_{o2}}=1
\end{gathered}
$} 
\par\end{center}%
\end{minipage}{\small{}{}}\\
 {\small{}{}}%
\fbox{%
\begin{minipage}[t][1\totalheight][c]{0.99\columnwidth}%
\textbf{\small{}{}Solution:}{\small{}{} $cost=4$, ~ $x_{R_{C}}=1,~~ x_{R_{D}^{-}R_{X}^{-}I_{o2}}=1,~~ h_{R_{C}},
h_{R_{D}^{-}},h_{R_{X}^{-}},h_{I_{o1}}=0$}{\small \par}

\textbf{\small{}{}Duals}{\small{}{}: $\pi_{R_{C}}=1,\pi_{R_{D}^{-}}=3,\pi_{R_{X}^{-}}=0,\omega_{I_{o1}}=0$}{\small \par}%
\end{minipage}} 
\par\end{center}%
\end{minipage}}{\small{}{}}%
\newline
\shadowbox{\begin{minipage}[t][1\totalheight][c]{0.95\columnwidth}%
\begin{center}
\textbf{\small{}{}PP 2d}{\small{}{}}\\
 {\small{}{}}%
\begin{minipage}[t]{1\columnwidth}%
\begin{flushleft}
\vspace{-2mm}
{\small{}{}Min \setlength{\abovedisplayskip}{0pt} 
\[
b_{C}+b_{D}+b_{X}+b_{o2}-1r_{R_{C}}-3r_{R_{D}^{-}}-0r_{R_{X}^{-}}-0r_{I_{o2}}
\]
}\vspace{-0mm}
{\small{}{}Subject to: ~~~ (CPP 2a)}  
\par\end{flushleft}%
\end{minipage}{\small{}{}}\\
 {\small{}{}}%
\fbox{%
\begin{minipage}[t][1\totalheight][c]{0.99\columnwidth}%
\textbf{\small{}{}Solution:}{\small{}{} $cost=-1$, $b_{C}=0$,
$b_{D}=1$, $b_{X}=0$, $b_{o2}=1$, $r_{R_{C}}=0$, $r_{R_{D}^{-}}=1$,
$r_{R_{X}^{-}}=0$, $r_{I_{o2}}=1$, $r_{R_{\top}^{-}}=1$, $r_{S_{\top}^{-}}=1$}%
\end{minipage}} 
\par\end{center}%
\end{minipage}} 
\par\end{flushleft}%
\end{minipage}{\small{}{} }{\small \par}

\caption{Node $x_{2}$: Fourth ILP iteration}
\label{fig:ilp-2-d} 
\end{figure}
 
\begin{figure}[tp]
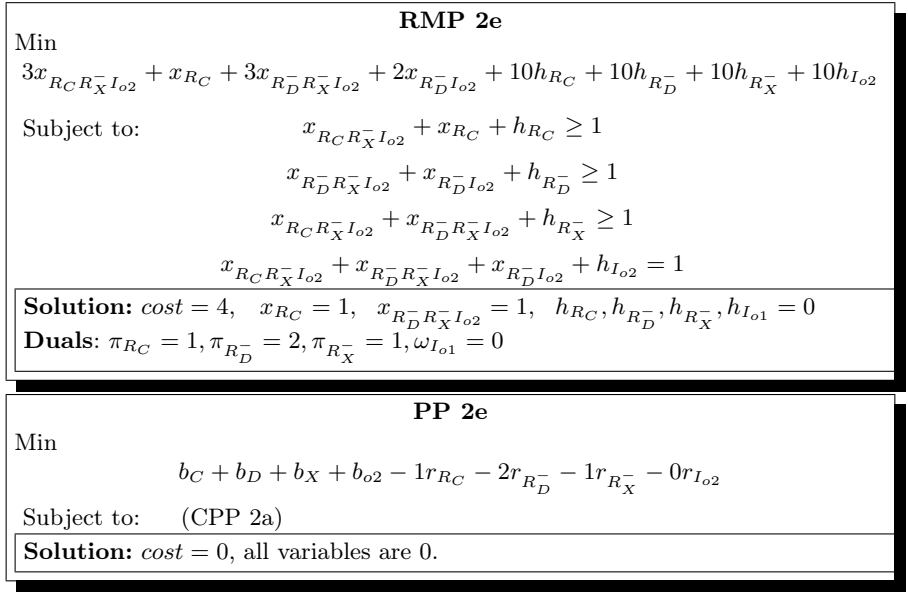

\begin{minipage}[t][1\totalheight][c]{1\columnwidth}%
\begin{flushleft}
{\small{}{}}%
\shadowbox{\begin{minipage}[t][1\totalheight][c]{0.95\columnwidth}%
\begin{center}
\textbf{\small{}{}RMP 2e}{\small{}{}}\\
 {\small{}{}}%
\begin{minipage}[t]{1\columnwidth}%
\begin{flushleft}
\vspace{-3mm}
{\small{}{}Min \setlength{\abovedisplayskip}{0pt} 
\[
3x_{R_{C}R_{X}^{-}I_{o2}}+x_{R_{C}}+3x_{R_{D}^{-}R_{X}^{-}I_{o2}}+2x_{R_{D}^{-}I_{o2}}+10h_{R_{C}}+10h_{R_{D}^{-}}+10h_{R_{X}^{-}}+10h_{I_{o2}}
\]
}\vspace{-4mm}
{\small{}{}Subject to:} 
\par\end{flushleft}%
\end{minipage}{\small{}{}}\\
 {\small{}{}}%
\begin{minipage}[t][1\totalheight][c]{0.99\columnwidth}%
\begin{center}
{\small{}{}$\begin{gathered}x_{R_{C}R_{X}^{-}I_{o2}}+x_{R_{C}}+h_{R_{C}}\geq1\\
x_{R_{D}^{-}R_{X}^{-}I_{o2}}+x_{R_{D}^{-}I_{o2}}+h_{R_{D}^{-}}\geq1\\
x_{R_{C}R_{X}^{-}I_{o2}}+x_{R_{D}^{-}R_{X}^{-}I_{o2}}+h_{R_{X}^{-}}\geq1\\
x_{R_{C}R_{X}^{-}I_{o2}}+x_{R_{D}^{-}R_{X}^{-}I_{o2}}+x_{R_{D}^{-}I_{o2}}+h_{I_{o2}}=1
\end{gathered}
$} 
\par\end{center}%
\end{minipage}{\small{}{}}\\
 {\small{}{}}%
\fbox{%
\begin{minipage}[t][1\totalheight][c]{0.99\columnwidth}%
\textbf{\small{}{}Solution:}{\small{}{} $cost=4$, ~  $x_{R_{C}}=1,~~x_{R_{D}^{-}R_{X}^{-}I_{o2}}=1, ~~ h_{R_{C}},
h_{R_{D}^{-}},h_{R_{X}^{-}},h_{I_{o1}}=0$}{\small \par}

\textbf{\small{}{}Duals}{\small{}{}: $\pi_{R_{C}}=1,\pi_{R_{D}^{-}}=2,\pi_{R_{X}^{-}}=1,\omega_{I_{o1}}=0$}{\small \par}%
\end{minipage}} 
\par\end{center}%
\end{minipage}}{\small{}{}}%
\newline
\shadowbox{\begin{minipage}[t][1\totalheight][c]{0.95\columnwidth}%
\begin{center}
\textbf{\small{}{}PP 2e}{\small{}{}}\\
 {\small{}{}}%
\begin{minipage}[t]{1\columnwidth}%
\begin{flushleft}
\vspace{-2mm}
{\small{}{}Min \setlength{\abovedisplayskip}{0pt} 
\[
b_{C}+b_{D}+b_{X}+b_{o2}-1r_{R_{C}}-2r_{R_{D}^{-}}-1r_{R_{X}^{-}}-0r_{I_{o2}}
\]
}\vspace{-0mm}
{\small{}{}Subject to: ~~~ (CPP 2a)}  
\par\end{flushleft}%
\end{minipage}{\small{}{}}\\
 {\small{}{}}%
\fbox{%
\begin{minipage}[t][1\totalheight][c]{0.99\columnwidth}%
\textbf{\small{}{}Solution:}{\small{}{} $cost=0$, all variables
are 0.}%
\end{minipage}} 
\par\end{center}%
\end{minipage}} 
\par\end{flushleft}%
\end{minipage}{\small{}{} }{\small \par}

\caption{Node $x_{2}$: Fifth ILP iteration}
\label{fig:ilp-2-e} 
\end{figure}

\begin{figure}[tp]
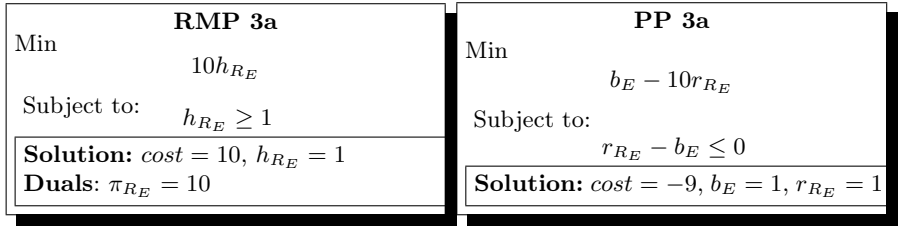

\begin{minipage}[t][1\totalheight][c]{1\columnwidth}%
\begin{flushleft}
{\small{}{}}%
\shadowbox{\begin{minipage}[t][1\totalheight][c]{0.46\columnwidth}%
\begin{center}
\textbf{\small{}{}RMP 3a}{\small{}{}}\\
 {\small{}{}}%
\begin{minipage}[t]{1\columnwidth}%
\begin{flushleft}
\vspace{-3mm}
{\small{}{}Min \setlength{\abovedisplayskip}{0pt} \vspace{-1mm}
\[
10h_{R_{E}}
\]
}\vspace{-2mm}
{\small{}{}Subject to:} 
\par\end{flushleft}%
\end{minipage}{\small{}{}}\\
 {\small{}{}}%
\begin{minipage}[t][1\totalheight][c]{0.99\columnwidth}%
\begin{center}
{\small{}{}$\begin{gathered}h_{R_{E}}\geq1\end{gathered}
$} 
\par\end{center}%
\end{minipage}{\small{}{}}\\
 {\small{}{}}%
\fbox{%
\begin{minipage}[t][1\totalheight][c]{0.99\columnwidth}%
\textbf{\small{}{}Solution:}{\small{}{} $cost=10$, $h_{R_{E}}=1$}{\small \par}

\textbf{\small{}{}Duals}{\small{}{}: $\pi_{R_{E}}=10$}{\small \par}%
\end{minipage}} 
\par\end{center}%
\end{minipage}}{\small{}{}}%
\shadowbox{\begin{minipage}[t][1\totalheight][c]{0.45\columnwidth}%
\begin{center}
\textbf{\small{}{}PP 3a}{\small{}{}}\\
 {\small{}{}}%
\begin{minipage}[t]{1\columnwidth}%
\begin{flushleft}
\vspace{-2mm}
{\small{}{}Min \setlength{\abovedisplayskip}{0pt} 
\[
b_{E}-10r_{R_{E}}
\]
}\vspace{-0mm}
{\small{}{}Subject to:} 
\par\end{flushleft}%
\end{minipage}{\small{}{}}\\
 {\small{}{}}%
\begin{minipage}[t][1\totalheight][c]{1\columnwidth}%
\begin{center}
{\small{}{}$\begin{gathered}r_{R_{E}}-b_{E}\leq0\end{gathered}
$} 
\par\end{center}%
\end{minipage}{\small{}{}}\\
 {\small{}{}}%
\fbox{%
\begin{minipage}[t][1\totalheight][c]{0.99\columnwidth}%
\textbf{\small{}{}Solution:}{\small{}{} $cost=-9$, $b_{E}=1$,
$r_{R_{E}}=1$}%
\end{minipage}} 
\par\end{center}%
\end{minipage}} 
\par\end{flushleft}%
\end{minipage}{\small{}{} }{\small \par}

\caption{Node $x_{3}$: First ILP iteration}
\label{fig:ilp-3-a} 
\end{figure}
 
\begin{figure}[tp]
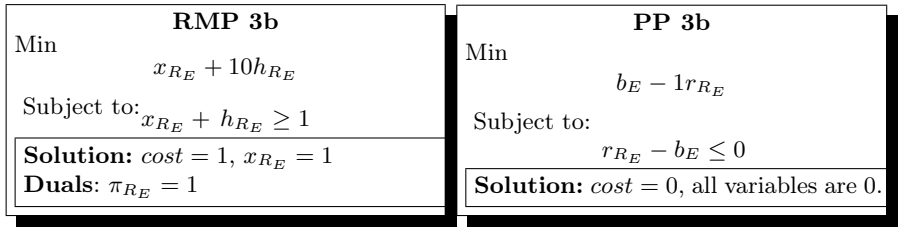

\begin{minipage}[t][1\totalheight][c]{1\columnwidth}%
\begin{flushleft}
{\small{}{}}%
\shadowbox{\begin{minipage}[t][1\totalheight][c]{0.46\columnwidth}%
\begin{center}
\textbf{\small{}{}RMP 3b}{\small{}{}}\\
 {\small{}{}}%
\begin{minipage}[t]{1\columnwidth}%
\begin{flushleft}
\vspace{-3mm}
{\small{}{}Min \setlength{\abovedisplayskip}{0pt} \vspace{-1mm}
\[
x_{R_{E}}+10h_{R_{E}}
\]
}\vspace{-2mm}
{\small{}{}Subject to:} 
\par\end{flushleft}%
\end{minipage}{\small{}{}}\\
 {\small{}{}}%
\begin{minipage}[t][1\totalheight][c]{0.99\columnwidth}%
\begin{center}
{\small{}{}$x_{R_{E}}+\begin{gathered}h_{R_{E}}\geq1\end{gathered}
$} 
\par\end{center}%
\end{minipage}{\small{}{}}\\
 {\small{}{}}%
\fbox{%
\begin{minipage}[t][1\totalheight][c]{0.99\columnwidth}%
\textbf{\small{}{}Solution:}{\small{}{} $cost=1$, $x_{R_{E}}=1$}{\small \par}

\textbf{\small{}{}Duals}{\small{}{}: $\pi_{R_{E}}=1$}{\small \par}%
\end{minipage}} 
\par\end{center}%
\end{minipage}}{\small{}{}}%
\shadowbox{\begin{minipage}[t][1\totalheight][c]{0.45\columnwidth}%
\begin{center}
\textbf{\small{}{}PP 3b}{\small{}{}}\\
 {\small{}{}}%
\begin{minipage}[t]{1\columnwidth}%
\begin{flushleft}
\vspace{-2mm}
{\small{}{}Min \setlength{\abovedisplayskip}{0pt} 
\[
b_{E}-1r_{R_{E}}
\]
}\vspace{-0mm}
{\small{}{}Subject to:} 
\par\end{flushleft}%
\end{minipage}{\small{}{}}\\
 {\small{}{}}%
\begin{minipage}[t][1\totalheight][c]{1\columnwidth}%
\begin{center}
{\small{}{}$\begin{gathered}r_{R_{E}}-b_{E}\leq0\end{gathered}
$} 
\par\end{center}%
\end{minipage}{\small{}{}}\\
 {\small{}{}}%
\fbox{%
\begin{minipage}[t][1\totalheight][c]{0.99\columnwidth}%
\textbf{\small{}{}Solution:}{\small{}{} $cost=0$, all variables
are 0.}%
\end{minipage}} 
\par\end{center}%
\end{minipage}} 
\par\end{flushleft}%
\end{minipage}{\small{}{} }{\small \par}

\caption{Node $x_{3}$: Second ILP iteration}
\label{fig:ilp-3-b} 
\end{figure}

\item By unfolding $C$ in the label of $x_{3}$ we get $\mathcal{L}(x_{3})=\{C,\exists R.E\}$.
RMP starts with artificial variables, $\mathcal{P}'$ is initially
empty, and $Q_{\exists}=\{R_{E}\}$, $Q_{\forall}=\emptyset$ and
$Q_{o}=\emptyset$ (see Fig.\ \ref{fig:ilp-3-a}). 
\item Since the value of $r_{R_{E}}$ is 1 in (PP 3a), the variable $x_{R_{E}}$
is added to (RMP 3b). The cost of RMP is reduced from 10 in (RMP 3a)
to 1 in (RMP 3b). 
\item All artificial variables in (RMP 3b) are zero and the reduced cost
of (PP 3b) is not negative. Therefore, AM terminates after the second
iteration and returns the solution $\sigma(x_{3})=\{\left\langle \left\{ R\right\} ,\left\{ E\right\} ,1\right\rangle \}$. 
\item The $\mathit{fil}$-Rules creates node $x_{4}$ with $\mathcal{L}(x_{4})\leftarrow\{E\}$
and $\card{x_{4}}\leftarrow1$. The $e$-Rule creates edges $\left\langle x_{3},x_{4}\right\rangle $
and $\left\langle x_{4},x_{3}\right\rangle $ with $\mathcal{L}\left(\left\langle x_{3},x_{4}\right\rangle \right)\leftarrow \{R,S\}$
and $\mathcal{L}\left(\left\langle x_{4},x_{3}\right\rangle \right)\leftarrow\{R^{-},S^{-}\}$.
\item $\mathcal{L}(x_{4})=\{E,\forall S^{-}.\left\{ o1\right\} \}$ and
after unfolding $E$ the $\forall$-Rule adds $o1$ to $\mathcal{L}(x_{3})$.
However, $o1$ already occurs in $\mathcal{L}(x_{1})$ and $x_{1}\neq x_{3}$.
Therefore, the $\mathit{nom_{\mathit{merge}}}$-Rule merges node $x_{3}$
into node $x_{1}$. 
\item Since no more rules are applicable, the tableau algorithm terminates.
\end{enumerate}

\section{Proof of Correctness}

In this appendix we present a tableau for DL $\mathcal{SHOI}$ and
proof of the algorithm's termination, soundness and completeness.

\subsection{A Tableau for DL $\mathcal{SHOI}$}
\label{tableau-appendix}

We define a tableau for DL $\mathcal{SHOI}$ based on the standard
tableau for DL $\mathcal{SHOI}Q$, introduced in \cite{horrocks2007tableau}.
For convenience, we assume that all concept descriptions are in negation
normal form, i.e., the negation sign only appears in front of concept names (atomic concepts). 
A label is assigned to each node in
the completion graph (CG) and that label is a subset of possible concept
expressions. We define $clos(C)$ as the closure of a concept expression
$C$.

\noindent \textbf{Definition 1:} The closure $clos(C)$ for a concept
expression $C$ is a smallest set of concepts such that: 
\begin{itemize}
\item $C\in clos(C)$, 
\item $\neg D\in clos(C)\quad\Longrightarrow\quad D\in clos(C)$,
\item $(B\sqcup D)\in clos(C)\:or\:(B\sqcap D)\in clos(C)\quad\Longrightarrow\quad B\in clos(C),D\in clos(C)$,
\item $\forall R.D\in clos(C)\quad\Longrightarrow\quad D\in clos(C)$
\item $\exists R.D\in clos(C)\quad\Longrightarrow\quad D\in clos(C)$
\end{itemize}
where $B,D\in N$ and $R\in N_{R}$. For a Tbox $\mathcal{T}$, if
($C\sqsubseteq D\in\mathcal{T}$) or ($C\equiv D\in\mathcal{T}$)
then $clos(C)\subseteq clos(\mathcal{T})$ and $clos(D)\subseteq clos(\mathcal{T})$.
Similarly, for an Abox $\mathcal{A}$, if ($a:C\in\mathcal{A}$) then
$clos(C)\subseteq clos(\mathcal{A})$.

\noindent \textbf{Definition 2:} If $(\mathcal{T},\mathcal{R})$ is
a $\mathcal{SHOI}$ knowledge base w.r.t. Tbox $\mathcal{T}$ and
role hierarchy $\mathcal{R}$, a tableau $\mathrm{T}$ for $(\mathcal{T},\mathcal{R})$
is defined as a triple $(\mathbf{S},\mathcal{L},\mathcal{E})$ such
that: $\mathbf{S}$ is a set of individuals, $\mathcal{L}:\mathbf{S}\longrightarrow2^{clos(\mathcal{T})}$
maps each individual to a set of concepts, and $\mathcal{E}:N_{R}\longrightarrow2^{\mathbf{S}\times\mathbf{S}}$
maps each role in $N_{R}$ to a set of pairs of individuals in $\mathbf{S}$.
For all $x,y\in\mathbf{S}$, $C,C_{1},C_{2}\in clos(\mathcal{T})$,
and $R,S\in N_{R}$, the following properties must always hold:

\begin{description}
\item [{(P1)}] if $C\in\mathcal{L}(x)$, then $\neg C\notin\mathcal{L}(x)$,
\item [{(P2)}] if $(C_{1}\sqcap C_{2})\in\mathcal{L}(x)$, then $C_{1}\in\mathcal{L}(x)$
and $C_{2}\in\mathcal{L}(x)$, 
\item [{(P3)}] if $(C_{1}\sqcup C_{2})\in\mathcal{L}(x)$, then $C_{1}\in\mathcal{L}(x)$
or $C_{2}\in\mathcal{L}(x)$, 
\item [{(P4)}] if $\forall R.C\in\mathcal{L}(x)$ and $\left\langle x,y\right\rangle \in\mathcal{E}(R)$,
then $C\in\mathcal{L}(y)$, 
\item [{(P5)}] if $\exists R.C\in\mathcal{L}(x)$, then there is some $y\in\mathbf{S}$
such that $\left\langle x,y\right\rangle \in\mathcal{E}(R)$ and $C\in\mathcal{L}(y)$, 
\item [{(P6)}] if $\forall S.C\in\mathcal{L}(x)$ and $\left\langle x,y\right\rangle \in\mathcal{E}(U)$
for some $U,R$ with $U\sqsubseteq_{*}R$, $R\sqsubseteq_{*}S$, and $R\in N_{R_{+}}$, then
$\forall R.C\in\mathcal{L}(y)$, 
\item [{(P7)}] if $\left\langle x,y\right\rangle \in\mathcal{E}(R)$ and
$R\sqsubseteq_{*} S\in\mathcal{R}$, then $\left\langle x,y\right\rangle \in\mathcal{E}(S)$, 
\item [{(P8)}] $\left\langle x,y\right\rangle \in\mathcal{E}(R)$ iff $\left\langle y,x\right\rangle \in\mathcal{E}(\mathsf{Inv}(R))$, 
\item [{(P9)}] if $o\in\mathcal{L}(x)\cap\mathcal{L}(y)$ for some $o\in N_{o}$,
then $x=y$, and
\item [{(P10)}] for each $o\in N_{o}$ occurring in $\mathcal{T}$, $\card{}\{x\in\mathbf{S}\mid o\in\mathcal{L}(x)\}=1$.
\end{description}

\subsection{Proof of the algorithm's termination, soundness and completeness}
\label{sec:proofs}

\noindent These proofs are similar to the one found in \cite{horrocks2007tableau}.
\begin{lemma}
A $\mathcal{SHOI}$ knowledge base $(\mathcal{T},\mathcal{R})$ is
consistent iff there exists a tableau for $(\mathcal{T},\mathcal{R})$. \end{lemma}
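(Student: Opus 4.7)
The plan is to prove both directions of the iff separately, following the standard pattern used for $\mathcal{SHOIQ}$ tableaux \cite{horrocks2007tableau} and adapting it to our simpler $\mathcal{SHOI}$ setting (no QNRs to worry about, but still nominals with inverses and transitivity).

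For the ``only if'' direction, assume $(\mathcal{T},\mathcal{R})$ is consistent, so it has a model $\mathcal{I}=(\Delta^{\mathcal{I}},\cdot^{\mathcal{I}})$. I would construct a tableau $\mathrm{T}=(\mathbf{S},\mathcal{L},\mathcal{E})$ directly from $\mathcal{I}$ by setting $\mathbf{S}=\Delta^{\mathcal{I}}$, $\mathcal{L}(x)=\{C\in clos(\mathcal{T})\mid x\in C^{\mathcal{I}}\}$, and $\mathcal{E}(R)=R^{\mathcal{I}}$ for every $R\in N_{R}$. Verification of (P1)--(P10) is then a line-by-line check against the semantics in Table~\ref{QCR-syntax}: (P1) follows since $x\in C^{\mathcal{I}}$ excludes $x\in(\neg C)^{\mathcal{I}}$; (P2)--(P5) are immediate from the interpretation of $\sqcap,\sqcup,\forall,\exists$; (P6) uses the fact that if $R\in N_{R_+}$ and $U\sqsubseteq_{*}R\sqsubseteq_{*}S$ then any $R$-successor of an $R$-successor of $x$ is an $S$-successor of $x$, so the value restriction propagates; (P7) and (P8) are exactly the semantics of role inclusion and inverses; (P9) and (P10) follow from $\card{\{o\}^{\mathcal{I}}}=1$ and the absence of the unique name assumption (two nominals with a common instance coincide as singletons).

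For the ``if'' direction, assume a tableau $\mathrm{T}=(\mathbf{S},\mathcal{L},\mathcal{E})$ exists and build a model $\mathcal{I}$ by taking $\Delta^{\mathcal{I}}=\mathbf{S}$, $A^{\mathcal{I}}=\{x\in\mathbf{S}\mid A\in\mathcal{L}(x)\}$ for every $A\in N_{C}$, and for each $o\in N_{o}$ letting $o^{\mathcal{I}}$ be the unique $x$ with $o\in\mathcal{L}(x)$ guaranteed by (P10) (with (P9) ensuring well-definedness when the same individual carries several nominal names). The delicate part is interpreting roles so that transitivity, the hierarchy, and inverses coexist. I would set
\begin{equation*}
R^{\mathcal{I}} \;=\; \bigcup_{S\sqsubseteq_{*}R}\widehat{\mathcal{E}}(S) \;\cup\; \bigl\{(y,x)\mid (x,y)\in\bigcup_{S\sqsubseteq_{*}\mathsf{Inv}(R)}\widehat{\mathcal{E}}(S)\bigr\},
\end{equation*}
where $\widehat{\mathcal{E}}(S)=\mathcal{E}(S)^{+}$ (transitive closure) if $S\in N_{R_{+}}$ and $\widehat{\mathcal{E}}(S)=\mathcal{E}(S)$ otherwise. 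Closing under inverses via (P8) and under subroles via (P7) ensures that $\mathcal{E}$ is respected, transitive roles are interpreted transitively, and every RIA is satisfied.

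Then I would show by structural induction on $C\in clos(\mathcal{T})$ that $C\in\mathcal{L}(x)\Rightarrow x\in C^{\mathcal{I}}$. The atomic, nominal, boolean, and $\exists$ cases are immediate from (P2), (P3), (P5), (P10). For $\forall R.C$ the main obstacle appears: since $R^{\mathcal{I}}$ may contain pairs obtained by transitive closure through some $U\sqsubseteq_{*}R$ with $U\in N_{R_{+}}$, a pair $(x,y)\in R^{\mathcal{I}}$ need not come from a single edge in $\mathcal{E}(R)$, so (P4) alone is insufficient; this is precisely why (P6) is present. I would argue by induction along the chain realizing $(x,y)$: (P6) propagates $\forall R.C$ from $x$ to an intermediate successor as $\forall R.C$ again (via some transitive subrole), and (P4) then delivers $C$ at the endpoint. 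Finally, since $\mathcal{T}$ has been internalized into the single axiom $\top\sqsubseteq C_{\mathcal{T}}$ and $C_{\mathcal{T}}\in\mathcal{L}(x)$ for all $x\in\mathbf{S}$ by construction of the tableau (guaranteed by how the algorithm seeds the graph), every GCI of $\mathcal{T}$ holds in $\mathcal{I}$, so $\mathcal{I}$ is a model and $(\mathcal{T},\mathcal{R})$ is consistent. The hardest part is thus the $\forall$-case of the induction combined with getting the role interpretation right in the presence of both inverses and transitive subroles.
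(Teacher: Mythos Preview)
Your proposal is correct and coincides with the paper's approach: the paper's own proof merely states that the argument is analogous to the standard $\mathcal{SHOIQ}$ tableau correctness proof in \cite{horrocks2007tableau}, with (P9) and (P10) securing the nominal semantics, and you have spelled out precisely that standard construction. One minor slip worth fixing: in the ``if'' direction you justify $C_{\mathcal{T}}\in\mathcal{L}(x)$ for all $x\in\mathbf{S}$ by appealing to ``how the algorithm seeds the graph,'' but this lemma concerns \emph{abstract} tableaux independent of the algorithm, so that condition must come from the tableau definition itself (it is part of the definition in \cite{horrocks2007tableau}, even though Definition~2 here does not list it explicitly).
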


\begin{proof}
\noindent The proof is analogous to the one presented in \cite{horrocks2007tableau}.
\textbf{(P9) }and\textbf{ (P10)} ensure that the nominal semantics
are preserved by interpreting them as singletons. \end{proof}

\begin{lemma}
When started with a $\mathcal{SHOI}$ knowledge base $(\mathcal{T},\mathcal{R})$,
the algorithm terminates.\end{lemma}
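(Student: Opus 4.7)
The plan is to adapt the standard termination argument for $\mathcal{SHOI}$ tableaux (in the style of \cite{horrocks2007tableau}) to accommodate the algebraic module (AM) and equality blocking. I would organize the proof into three parts: (i) each AM invocation terminates and returns a finite solution set; (ii) every node label stays inside a finite set of concepts; (iii) the completion graph itself remains finite, so only finitely many rules are ever applicable.

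For part (i), I would invoke Proposition 1 together with the finiteness of the decomposition set $Q$: since $Q$ is built from the finitely many existential restrictions, universal restrictions and nominals occurring in $\mathcal{L}(x)$, the partitioning $\mathcal{P}$ is finite, and the branch-and-price procedure is a finite enumeration (CPLEX is guaranteed to terminate on the relaxations, and the branch-and-bound tree is finite). Consequently every call to AM produces a finite set $\sigma(x)$, so at most finitely many applications of the $\mathit{fil}$-Rule and $e$-Rule are triggered per node. For part (ii), I would show by induction on rule applications that $\mathcal{L}(x)\subseteq clos(C_{\mathcal{T}})$ for every node $x$ and every edge label stays inside $N_{R}\cup\{R^{-}\mid R\in N_{R}\}$: preprocessing guarantees $C_{\mathcal{T}}\in clos(C_{\mathcal{T}})$, and each expansion rule ($\sqcap$, $\sqcup$, $\forall$, $\forall_{+}$, $\mathit{fil}$, $\mathit{inverse}$, $\mathit{nom_{merge}}$) only adds concepts already in $clos(C_{\mathcal{T}})$. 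Hence each label is drawn from a fixed finite powerset, so only finitely many distinct labels exist.

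For part (iii), I would combine equality blocking with a bound on the branching factor. The $\mathit{fil}$-Rule creates at most $\card{\sigma(x)}$ successors at $x$, and this number is bounded uniformly by the number of partition elements over $clos(C_{\mathcal{T}})$. Equality blocking guarantees that along any path from the root, once a node $y$ has the same label as some ancestor $y'$, no further $\mathit{fil}$-Rule expansion is performed at $y$; thus path length is bounded by $2^{|clos(C_{\mathcal{T}})|}$. Together with the bounded branching, this gives a finite completion graph. Finally I would argue that the $\mathit{nom_{merge}}$-Rule and the $e$-Rule monotonically shrink the number of distinct nominals/edges up to isomorphism, so they cannot be applied infinitely often either. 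Since every rule strictly enlarges some component of $G$ (a label, $\mathcal{B}(y)$, $E$, or merges two nodes), and all such components are bounded, the rule application sequence is finite.

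The main obstacle I anticipate is the interaction between the $\mathit{inverse}$-Rule, the $\mathit{nom_{merge}}$-Rule and equality blocking: merging a node into a nominal node can reactivate $\forall R^{-}$ obligations on previously blocked ancestors (as in step 15--16 of the worked example), and the newly added tuples in $\mathcal{B}(y)$ could in principle cause AM to be re-invoked and create new successors. The subtle point is to show this cascade is bounded. I would handle it by defining a well-founded order on completion graphs where the lexicographic tuple $(\text{\# unmerged nominal pairs},\text{\# unblocked nodes},\sum_{x}|clos(C_{\mathcal{T}})\setminus\mathcal{L}(x)|,\sum_{x}|\mathcal{B}(x)|)$ strictly decreases with every rule application that is not a pure label expansion, and label expansions themselves are bounded by $|clos(C_{\mathcal{T}})|$ per node. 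This yields termination of the combined tableau/algebraic procedure.
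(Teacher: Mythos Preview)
Your three-part plan (AM calls terminate, labels live in $clos(C_{\mathcal{T}})$, blocking plus bounded out-degree bounds the graph) is essentially the skeleton the paper uses as well: the paper bounds $\card{\mathcal{P}}$ by $2^{\card{Q}}-1$, notes the ILP then terminates, observes that every rule except $\mathit{nom_{merge}}$ is monotone on the graph, and appeals to equality blocking to bound path length. So at the structural level you and the paper agree.

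Where you diverge is exactly the obstacle you flag---whether a merge can re-trigger AM and the $\mathit{fil}$-Rule. Here there is a genuine gap in your proposal. The lexicographic measure you write down does \emph{not} strictly decrease under the rules you need it for: the $\mathit{fil}$-Rule creates fresh (unblocked) nodes while leaving the first component unchanged, so the second component goes up; and the $\mathit{inverse}$-Rule \emph{increases} $\sum_x \card{\mathcal{B}(x)}$, so your fourth component moves the wrong way. You would at least need to replace these components by their complements relative to an a~priori global bound, and for that you must already know the node set is bounded---which is precisely what is in question once merges can reactivate ancestors.

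The paper sidesteps this with a much more direct observation you are missing: the $\mathit{fil}$-Rule's precondition checks whether an $\mathsf{R}$-neighbour with label $\supseteq\mathsf{C}$ and cardinality $\ge n$ already exists, and the $\mathit{nom_{merge}}$-Rule \emph{preserves} such neighbours (the merged node inherits the edge and the label). Hence for each existential $\exists R.C\in\mathcal{L}(x)$ the $\mathit{fil}$-Rule can fire at most once at $x$, regardless of later merges; no re-invocation cascade can create new nodes. With this in hand the standard out-degree/blocking bound goes through without any well-founded order on graphs. I would recommend replacing your lexicographic argument by this ``$\mathit{fil}$ fires once per existential per node, and merging keeps the witness'' observation; it is both simpler and actually closes the gap.
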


\begin{proof}
Termination is a consequence of the following properties of the expansion
rules:

\begin{itemize}

\item[1.]Since a partitioning $\mathcal{P}$ is a power set of $Q$
that contains all subsets of $Q$ except the empty set, the size of
$\mathcal{P}$ is bounded by $2^{\card{Q}}-1$ where $\card{Q}$ is the size
of $Q$. $\mathcal{P}$ is computed only once for each node.

\item[2.]Due to the fixed number ($2^{\card{Q}}-1$ ) of variables, the
solution for the linear inequalities can be computed in polynomial
time. Moreover, it does not affect the termination of the expansion
rules. 

\item[3.]Each rule except the $\mathit{nom_\mathit{merge}}$-Rule extends the completion
graph by adding new nodes or extending node labels without removing
nodes or elements from node.

\item[4.]New nodes are only generated by the $\mathit{fil}$-Rule for a concept
$\exists R.C\in L(x)$. The $\mathit{fil}$-Rule can only be triggered once
for each of such concept for a node $x$. If a neighbour $y$ of $x$
was generated by the $\mathit{fil}$-rule for this concept and $y$ is later
merged in some other node $z$ by the $\mathit{nom_\mathit{merge}}$-rule then an
$R$-edge toward $z$ will be created and labels of both nodes will
be merged. Therefore, there will always be some $R$-neighbour of
$x$ with $C$ in its label. Hence, the $\mathit{fil}$-Rule cannot be applied
to $x$ for a concept $\exists R.C$ again.

\item[5.]Equality blocking \cite{horrocks1999description,hladik2004tableau} is used to prevent application of expansion rules when the construction becomes iterative.

\item[6.]The $\mathit{fil}$-Rule can not be applied to the blocked nodes. 

\end{itemize}\end{proof}

\begin{lemma}
\label{lemma-3}
If the expansion rules can be applied to a $\mathcal{SHOI}$ knowledge
base $(\mathcal{T},\mathcal{R})$ in such a way that they yield a
complete and clash-free completion graph, then there exists a tableau
for $(\mathcal{T},\mathcal{R})$ .\end{lemma}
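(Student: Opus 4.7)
The plan is to mimic the standard model-construction argument from \cite{horrocks2007tableau}: given a complete and clash-free completion graph $G=(V,E,\mathcal{L},\mathcal{B})$, I would build a tableau $\mathrm{T}=(\mathbf{S},\mathcal{L}',\mathcal{E})$ by unraveling $G$ to cope with equality blocking. Concretely, $\mathbf{S}$ consists of paths $[x_0,x_1,\ldots,x_n]$ starting at an initial (nominal) node $x_0$ such that each consecutive pair is connected by an edge in $G$, with the convention that whenever a step would enter a blocked node $x$, it instead continues from the blocker $x'$ of $x$. The labeling $\mathcal{L}'([x_0,\ldots,x_n])$ is set to $\mathcal{L}(x_n)$ (or $\mathcal{L}(x_n')$ if $x_n$ is blocked by $x_n'$), and $\mathcal{E}(R)$ collects all pairs of paths $(\mathbf{p},\mathbf{p}\cdot y)$ with $R\in\mathcal{L}(\mathit{last}(\mathbf{p}),y)$, extended by the inverse pairs required by (P8) and closed under role inclusion to satisfy (P7). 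Nominal paths of length $>0$ ending in a nominal node are identified with the length-zero path at that nominal to ensure (P9).

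Next I would verify properties (P1)--(P8). Clash-freeness of $G$ gives (P1) directly. Properties (P2) and (P3) follow because the $\sqcap$-Rule and $\sqcup$-Rule are no longer applicable in a complete $G$, so the required sub-concepts already appear in the labels that $\mathcal{L}'$ inherits. Property (P4) is guaranteed by the $\forall$-Rule together with the $e$-Rule always adding implied superroles; (P6) follows from the $\forall_+$-Rule combined with (P4); (P7) and (P8) hold by construction of $\mathcal{E}$. Property (P5) is the first place where the algebraic module matters: for each $\exists R.C\in\mathcal{L}'(\mathbf{p})$, the AM's solution $\sigma(\mathit{last}(\mathbf{p}))$ contains a tuple $\langle\mathsf{R},\mathsf{C},n,\mathsf{V}\rangle$ with $R\in\mathsf{R}$ and $C\in\mathsf{C}$, so either a witness neighbour already exists (via the $\mathit{fil}$-Rule branch (2) or the $e$-Rule reusing a node in $\mathsf{V}$), or the $\mathit{fil}$-Rule branch (1) has created one; either way the unraveling produces the required successor path.

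The more delicate steps are (P9) and (P10), and here the interplay between the $\mathit{nom_{\mathit{merge}}}$-Rule, the $\mathit{inverse}$-Rule, and the AM solution drives the argument. For (P9), completeness forces that no two distinct nodes in $G$ share a nominal label (otherwise the $\mathit{nom_{\mathit{merge}}}$-Rule would still apply), and the path-identification step above extends this uniqueness to $\mathbf{S}$. For (P10), I must show that the set of paths carrying a given nominal $o\in N_o$ is a singleton; this reduces to the two observations that (i) constraint \eqref{eq:rmpCon3-1} in every successful RMP solution forces exactly one partition element per $I_q\in Q_o$, so nominal-bearing successors produced by the $\mathit{fil}$-Rule all collapse onto a unique $o$-labeled node through repeated applications of the $\mathit{nom_{\mathit{merge}}}$-Rule, and (ii) the $\mathit{inverse}$-Rule forces AM to account for any pre-existing inverse neighbour when generating partitions bounded by a universal restriction $\forall R^-.\{o_1,\ldots,o_k\}$, so no extra $o$-witness can be introduced through an inverse edge either.

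The main obstacle I expect is precisely this verification of (P10) in the presence of inverse roles: I must rule out the scenario in which the unraveled structure accidentally duplicates a nominal because of a back-edge that was not reflected in the partition element chosen by AM for some ancestor. The argument relies crucially on two facts established earlier in the paper, namely that the $\mathit{inverse}$-Rule records every existing inverse neighbour in $\mathcal{B}$ before AM is invoked, and that the $e$-Rule reuses nodes from the optional component $\mathsf{V}$ of the returned tuples rather than creating fresh ones. Pinning down formally that these mechanisms together prevent nominal duplication along unraveled inverse paths is the technical heart of the proof.
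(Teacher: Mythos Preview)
Your approach differs substantially from the paper's. The paper does \emph{not} unravel: it sets $\mathbf{S}=V$, $\mathcal{L}'=\mathcal{L}$, and $\mathcal{E}(R)=\{\langle x,y\rangle\in E\mid R\in\mathcal{L}(\langle x,y\rangle)\}$ directly from the complete, clash-free completion graph, and then checks (P1)--(P10) by appealing to completeness of $G$ (each unsatisfied property would leave some expansion rule applicable). For (P9) it simply observes that the $\mathit{nom_{\mathit{merge}}}$-Rule is inapplicable, and for (P10) it invokes the algebraic module's guarantee that the pairwise-disjoint partition elements together with the constraint $\sharp\{o\}^{\mathcal{I}}=1$ place each nominal in a single partition of cardinality~$1$. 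There is no path construction, no identification of nominal paths, and no explicit treatment of blocking in the tableau construction.

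Your unraveling route is the standard one from the $\mathcal{SHOIQ}$ paper the authors cite, and it is more careful on one point: it deals explicitly with blocked nodes when verifying (P5), whereas the paper's direct argument (``otherwise the $\mathit{fil}$-Rule would be applicable'') passes over the guard ``$x$ is not blocked'' in that rule. Conversely, the paper's direct construction sidesteps the nominal-identification step you single out as the main obstacle: since $\mathbf{S}=V$ and the $\mathit{nom_{\mathit{merge}}}$-Rule has already collapsed all nodes sharing a nominal, (P9) is immediate, and (P10) reduces to a property of the AM solution rather than to an analysis of inverse paths in an unraveled structure. In short, your plan is sound and arguably more rigorous about blocking, but it carries out considerably more machinery than the paper's own proof, and the difficulty you anticipate for (P10) is one the paper simply does not face.
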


\begin{proof}
Let $G=(V,E,\mathcal{L})$ be a complete and clash-free
completion graph. A tableau $\mathrm{T}=(\mathbf{S},\mathcal{L}',\mathcal{E})$
for $(\mathcal{T},\mathcal{R})$ can be obtained from $G$ such that:
$\mathbf{S}=V$, $\mathcal{L}'(x)=\mathcal{L}(x)$, and $\mathcal{E}(R)=\{\left\langle x,y\right\rangle \in E\mid\left(\{R\}\cap\mathcal{L}(\left\langle x,y\right\rangle )\right)\neq\emptyset\}$.
In order to show that $\mathrm{T}$ is a tableau for $(\mathcal{T},\mathcal{R})$
, we prove that $\mathrm{T}$ satisfies all properties (\textbf{P1})
- (\textbf{P10}) of tableau (see Definition 9):

\begin{itemize}

\item Since $G$ is clash-free, (\textbf{P1}) holds for $\mathrm{T}$.

\item Let $x$ be an individual in $\mathbf{S}$ with $C_{1}\sqcap C_{2}\in\mathcal{L}'(x)$,
$C_{1}\in\mathcal{L}'(x)$ and $C_{2}\notin\mathcal{L}'(x)$
and having a corresponding node $x$ in $G$ with $C_{1}\sqcap C_{2}\in\mathcal{L}(x)$,
$C_{1}\in\mathcal{L}(x)$ and $C_{2}\notin\mathcal{L}(x)$. It would make
the $\sqcap$-Rule applicable to node $x$ in $G$ but that is not possible
because $G$ is complete. Hence, (\textbf{P2}) holds for $\mathrm{T}$
and likewise (\textbf{P3}).

\item Consider $\forall R.C\in\mathcal{L}'(x)$ and $\langle x,y\rangle \in\mathcal{E}(R)$.
According to the definition of $\mathcal{E}(R)$, $\left\langle x,y\right\rangle \in\mathcal{E}(R)$
implies $\{\langle x,y\rangle \in E \mid \mathcal{L}(\langle x,y \rangle) \cap\{R\}\} \neq \emptyset$.
Since $G$ is complete, it implies that $C\in\mathcal{L}(y)$ and
consequently $C\in\mathcal{L}'(y)$. Therefore, (\textbf{P4})
is satisfied. 

\item For (\textbf{P5}), consider $\exists R.C\in\mathcal{L}'(x)$ and there exists no $R$-neighbour $y\in\mathbf{S}$ of $x$ with $C \subseteq \mathcal{L}'(y)$. It would make the $\mathit{fil}$-Rule
applicable to node $x$ in $G$ but that is not possible
because $G$ is complete. Therefore, (\textbf{P5}) holds for $\mathrm{T}$. 

\item (\textbf{P6}) is similar to (\textbf{P4}). 

\item (\textbf{P7}) and (\textbf{P8}) hold due to the definition
of $R$-neighbour. Furthermore, for (\textbf{P8}),
consider $\left\langle x,y\right\rangle \in\mathcal{E}(R)$ in $\mathrm{T}$
that implies $\{{\langle x,y\rangle \in E} \mid \mathcal{L}(\langle x,y\rangle) \cap \{R\}\} \neq \emptyset$
in $G$. Since $G$ is complete, it implies that $\{{\langle y,x\rangle \in E} \mid \mathcal{L}(\langle y,x\rangle) \cap \{\mathsf{Inv}(R)\}\} \neq \emptyset$
(due to the $e$-Rule) and consequently $\left\langle y,x\right\rangle \in\mathcal{E}(\mathsf{Inv}(R))$.

\item For (\textbf{P9}), consider $o\in\mathcal{L}'(x)\cap\mathcal{L}'(y)$
where $x\neq y$, and having two corresponding distinct nodes $x$
and $y$ in $G$ with $o\in\mathcal{L}(x)\cap\mathcal{L}(y)$ for
some nominal $o\in N_{o}$. In this case, the $\mathit{nom_\mathit{merge}}$-Rule
would need to be fired but that is not possible because $G$ is complete. Hence,
(\textbf{P9}) holds for $\mathrm{T}$. 

\item Since the partition elements of $\mathcal{P}$ are semantically pair-wise
 disjoint, i.e., if $p,p' \in \mathcal{P}$, $p\neq p'$ then $p^{\mathcal{I}} \cap (p')^{\mathcal{I}} = \emptyset$, and due to the nominal semantics $\#\{o\}^{I}=1$, the algebraic reasoner assigns the nominal $o$ to only one partition element $p$.
Therefore, the cardinality of $p$ will always be 1. In addition,
since nominals always exist, the nominal nodes are never removed through
pruning. Hence, (\textbf{P10}) always holds. 

\end{itemize}\end{proof}

\begin{lemma}
If there exists a tableau of a $\mathcal{SHOI}$ knowledge base $(\mathcal{T},\mathcal{R})$,
then, the expansion rules can be applied to a $\mathcal{SHOI}$ knowledge
base $(\mathcal{T},\mathcal{R})$ in such a way that they yield a
complete and clash-free completion graph.\end{lemma}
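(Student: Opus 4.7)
The plan is to use the existing tableau $\mathrm{T}=(\mathbf{S},\mathcal{L}',\mathcal{E})$ as a deterministic oracle that guides every non-deterministic choice made by the expansion rules. Concretely, I would build the completion graph $G$ alongside a mapping $\pi\colon V\to\mathbf{S}$ and maintain, as a loop invariant throughout the run of the algorithm, that (i) $\mathcal{L}(x)\subseteq\mathcal{L}'(\pi(x))$ for every $x\in V$, (ii) whenever $R\in\mathcal{L}(x,y)$ we have $\langle\pi(x),\pi(y)\rangle\in\mathcal{E}(R)$, and (iii) if $o\in\mathcal{L}(x)\cap\mathcal{L}(y)$ then $\pi(x)=\pi(y)$ (this is consistent with (P9)). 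The algorithm starts with the initial nominal node $x_0$ with $\mathcal{L}(x_0)=\{o\}\cup\{C_{\mathcal{T}}\}$, and by (P10) there is a unique $s\in\mathbf{S}$ with $o\in\mathcal{L}'(s)$, so we set $\pi(x_0)=s$. The invariant is then established since, by definition, $C_{\mathcal{T}}\in\mathcal{L}'(s)$ for every $s\in\mathbf{S}$.

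The inductive step treats each expansion rule. For the $\sqcap$-Rule, (P2) lets us add both conjuncts to $\mathcal{L}(x)$ while preserving (i). For the $\sqcup$-Rule, (P3) guarantees at least one disjunct lies in $\mathcal{L}'(\pi(x))$, and we always pick such a disjunct. The $\forall$-, $\forall_+$- and $\mathit{inverse}$-Rules are handled by (P4) and (P6) combined with (ii). For the $\mathit{nom_{merge}}$-Rule, if $o\in\mathcal{L}(x)\cap\mathcal{L}(y)$ then invariant (iii) gives $\pi(x)=\pi(y)$, and merging is compatible with both (i) and (ii). The $e$-Rule is justified by (P7). Finally, for the $\mathit{fil}$-Rule we must show that AM returns a non-empty solution $\sigma(x)$ compatible with $\mathrm{T}$: we map each newly created neighbour $y$ to a suitable $\pi(y)\in\mathbf{S}$ such that invariants (i)--(iii) are preserved.

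The main obstacle is showing that, whenever a tableau exists, the algebraic module cannot fail to find a feasible solution. The idea is to construct, from the tableau, a canonical integer solution to the RMP/PP system for every node $x$: group the set $\{y\in\mathbf{S}\mid\langle\pi(x),y\rangle\in\mathcal{E}(R)\text{ for some }R\text{ relevant to }x\}$ into equivalence classes $p_1,\dots,p_k$ where two neighbours are equivalent iff they satisfy exactly the same set of role-qualification and nominal atoms from $Q$. Setting $x_{p_i}=|\{y\in\mathbf{S}\mid y\text{ falls in class } p_i\}|$ and all artificial variables to $0$ yields an assignment that satisfies constraints \eqref{eq:rmpCon1-1} (by (P5)) and \eqref{eq:rmpCon3-1} (by (P10)), and the corresponding PP constraints because (P1)--(P10) ensure pairwise semantic compatibility of each class with subsumption, disjointness, role hierarchy (P7) and universal restrictions (P4), (P6). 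Hence Proposition 1 guarantees that CPLEX returns some optimal solution, not $\emptyset$, and we can read off $\pi$-images for the newly created nodes from the classes used by that solution, re-using any existing $\mathsf{R}$-neighbours in $\mathsf{V}$ as required by the $\mathit{fil}$-Rule.

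To conclude, termination (Lemma 2) ensures the construction halts with some completion graph $G$. Invariant (i) together with (P1) rules out the concept clash $\{A,\neg A\}\subseteq\mathcal{L}(x)$, and the argument above rules out an AM clash. Thus $G$ is complete and clash-free, which proves the lemma. The only genuinely new technical content beyond the standard $\mathcal{SHOIQ}$-style proof of \cite{horrocks2007tableau} is the explicit construction of a tableau-induced feasible solution for the ILP, so I expect that to be the part requiring the most care.
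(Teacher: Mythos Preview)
Your proposal is correct and follows essentially the same strategy as the paper: guide the rule applications by a mapping $\pi$ from completion-graph nodes into the tableau $\mathrm{T}$, maintain the invariants $\mathcal{L}(x)\subseteq\mathcal{L}'(\pi(x))$ and $R\in\mathcal{L}(x,y)\Rightarrow\langle\pi(x),\pi(y)\rangle\in\mathcal{E}(R)$, and verify rule by rule that the invariants are preserved. Two minor differences are worth noting. First, the paper records \emph{injectivity} of $\pi$ as its third invariant rather than your nominal-coincidence condition (iii); your (iii) is in fact already a consequence of (i) together with (P9), so it need not be carried separately, whereas injectivity is what the paper uses to keep distinct proxy nodes distinct. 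Second, for the $\mathit{fil}$-Rule the paper argues only at the level of ``(P5) provides witnesses, hence AM does not clash'' and appeals to Proposition~1, while you go further and exhibit an explicit tableau-induced integer assignment to the RMP; this is a genuine strengthening of the paper's argument. Be aware, however, that the step ``read off $\pi$-images from the classes used by \emph{that} solution'' tacitly assumes AM returns your canonical assignment, whereas CPLEX may return a different optimal solution; you have shown feasibility (no AM clash), but assigning $\pi$-images to the nodes actually created by AM's chosen optimum still needs a sentence of justification. The paper's proof is equally silent on this point.
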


\begin{proof}
Let $\mathrm{T}=(\mathbf{S},\mathcal{L}',\mathcal{E})$ be
a tableau for $(\mathcal{T},\mathcal{R})$. We use $\mathrm{T}$ to
trigger the application of the expansion rules such that they yield
a complete and clash-free completion graph $G=(V,E,\mathcal{L})$.
A function $\pi$ is used to map the nodes of G to elements of $\mathbf{S}$.
For all $x,y\in V$ and $R\in N_{R}$, the mapping $\pi$ satisfies
the following properties:

\begin{itemize}

\item[1.] $\mathcal{L}(x)\subseteq\mathcal{L}'(\pi(x))$

\item[2.] if $\left\langle x,y\right\rangle \in E$ and $R\in\mathcal{L}(\langle x,y\rangle) $
then $\left\langle \pi(x),\pi(y)\right\rangle \in\mathcal{E}(R)$

\item[3.] $x\neq y$ implies $\pi(x)\neq\pi(y)$

\end{itemize}

We show by applying the expansion rules defined in Figure \ref{Exp_Rules}
in order to obtain $G$, the properties of mapping $\pi$ are not
violated:

\begin{itemize}

\item The $\sqcap$-Rule, $\sqcup$-Rule, the $\forall$-Rule and
the $\forall_{+}$-Rule extend the label of a node $x$ without violating
$\pi$ properties due to (\textbf{P2}) - (\textbf{P4}) and (\textbf{P6})
of tableau. 

\item \textbf{The $\mathit{nom_\mathit{merge}}$-Rule:} If $o\in\mathcal{L}(x)\cap\mathcal{L}(y)$
for some nominal $o\in N_{o}$, then $o\in\mathcal{L}'(\pi(x))\cap\mathcal{L}'(\pi(y))$.
Since $\mathrm{T}$ is a tableau, (\textbf{P9}) and (\textbf{P10}) imply $\pi(x)=\pi(y)$.
Therefore, the $\mathit{nom_\mathit{merge}}$-Rule can be applied to merge nodes $x$
and $y$ without violating $\pi$ properties. 

\item \textbf{The $\mathit{inverse}$-Rule:} Since $\mathrm{T}$ is a tableau,
(\textbf{P9}) implies that if $\left\langle \pi(x),\pi(y)\right\rangle \in\mathcal{E}(R)$,
then there is a back edge $\left\langle \pi(y),\pi(x)\right\rangle \in\mathcal{E}(\mathsf{Inv}(R))$.
The $\mathit{inverse}$-Rule adds a tuple $\left\langle x,\{R^{-}\}\right\rangle $ to $\mathcal{B}(y)$ if $\forall R^{-}.C\in\mathcal{L}(y)$.
Since the $\mathit{inverse}$-Rule only adds information about an edge that already exists and this information is only used by AM, it does not violate $\pi$
properties.

\item \textbf{The $\mathit{fil}$-Rule:} The numerical restrictions imposed
by existential restrictions and nominals are encoded into inequalities.
The solution $\sigma$ is returned by the algebraic reasoner that
defines the distribution of fillers by satisfying the inequalities.
The $\mathit{fil}$-Rule creates a node for each corresponding partition element
returned by the algebraic reasoner. Since $\mathrm{T}$ is a tableau,
(\textbf{P5}) implies that there exist the individuals of $\mathbf{S}$ satisfying these existential
restrictions. Therefore, the $\mathit{fil}$-Rule does not violate properties
of tableau or $\pi$.

\item \textbf{The $e$-Rule:} For each $\exists R.C\in\mathcal{L}(x)$
we have $\exists R.C\in\mathcal{L}'(\pi(x))$ that means there
exists $\pi(y)\in\mathbf{S}$, $C\in\mathcal{L}'(\pi(y))$
and $\left\langle \pi(x),\pi(y)\right\rangle \in\mathcal{E}(R)$.
Since $\mathrm{T}$ is a tableau, (\textbf{P7}) implies $\left\langle \pi(x),\pi(y)\right\rangle \in\mathcal{E}(S)$ if 
$R\sqsubseteq_{*} S\in\mathcal{R}$ and (\textbf{P8}) implies $\left\langle \pi(y),\pi(x)\right\rangle \in\mathcal{E}(\mathsf{Inv}(R))$.
The $e$-Rule is applied to connect $x$ to its fillers, say $y_{i}$,
by 
creating edges $\left\langle x,y_{i}\right\rangle \in E$ between them, 
and to merge $\mathsf{R}$ into $\mathcal{L}(\left\langle x,y_{i}\right\rangle)$ and $\{\mathsf{Inv}(R) \mid R \in \mathsf{R}\}$ into $\mathcal{L}(\left\langle y_{i}, x \right\rangle)$ and all $S$ with $R\sqsubseteq_{*}S\in\mathcal{R}$ into $\mathcal{L}(\left\langle x,y_{i} \right\rangle)$ and $\mathsf{Inv}(S)$ into $\mathcal{L}(\left\langle y_{i},x \right\rangle)$ without violating $\pi$ properties. 
Moreover, (\textbf{P7}) and (\textbf{P8})
of tableau are also preserved. 

\end{itemize}\end{proof}

\end{document}